\def\eqref#1{equation~\ref{#1}}
\def\1{\bm{1}}
\DeclareMathAlphabet{\mathsfit}{\encodingdefault}{\sfdefault}{m}{sl}
\SetMathAlphabet{\mathsfit}{bold}{\encodingdefault}{\sfdefault}{bx}{n}
\newcommand{\E}{\mathbb{E}}
\theoremstyle{plain}
\newtheorem{lem}{Lemma}
\newtheorem{prop}{Proposition}
\newtheorem{res}{Result}
\newtheorem{cor}{Corollary}
\theoremstyle{definition}
\theoremstyle{remark}
\def\Corref#1{Corollary~\ref{#1}}
\title{Phase-aware Training Schedule Simplifies Learning in Flow-Based Generative Models}
\author{Santiago Aranguri \thanks{Equal contribution.}\\
Courant Institute of \\Mathematical Sciences\\
New York University\\
New York, NY 10012, USA \\
\texttt{aranguri@nyu.edu} \\
\And
Francesco Insulla \footnotemark[1] \\
Institute of Computational\\ and Mathematical Engineering\\
Stanford University\\
Stanford, CA 94305, USA \\
\texttt{franinsu@stanford.edu}
}
\begin{document}

\maketitle

\begin{abstract}
We analyze the training of a two-layer autoencoder used to parameterize a flow-based generative model for sampling from a high-dimensional Gaussian mixture. 
Previous work shows that the phase where the relative probability between the modes is learned disappears as the dimension goes to infinity without an appropriate time schedule. We introduce a time dilation that solves this problem. This enables us to characterize the learned velocity field, finding a first phase where the probability of each mode is learned and a second phase where the variance of each mode is learned. We find that the autoencoder representing the velocity field learns to simplify by estimating only the parameters relevant to each phase. Turning to real data, we propose a method that, for a given feature, finds intervals of time where training improves accuracy the most on that feature. Since practitioners take a uniform distribution over training times, our method enables more efficient training. We provide preliminary experiments validating this approach.





\end{abstract}

\section{Introduction}
In recent years, diffusion models have emerged as a powerful technique for learning to sample from high-dimensional distributions \cite{pmlr-v37-sohl-dickstein15, song2021scorebased, song2020generativemodelingestimatinggradients, ho2020denoisingdiffusionprobabilisticmodels}, especially in the context of generating images and recently also for text \cite{lou2024discretediffusionmodelingestimating}. The idea lies in learning, from data samples, a velocity field that pushes noisy datapoints to clean datapoints. Despite the remarkable performance of these models, there remain several open questions, including understanding what makes a good noise schedule, which is the focus of this paper.

We consider the problem of training a neural network to learn the velocity field to generate samples from a two-mode Gaussian mixture (GM). This serves as a prototypical example to understand how diffusion models handle learning features at different scales, since the two-mode GM has two scales: the macroscopic scale of the probability of each mode, and the microscopic scale of the variance of each mode. 

This problem was previously considered by \cite{cui2024analysislearningflowbasedgenerative}, but their analysis only handles the \textit{balanced} two-mode GM (i.e. the probability of each mode is exactly $1/2.$) On the other hand,  \cite{biroli2024dynamicalregimesdiffusionmodels} assume access to the exact velocity field and find that the phase where the probability of each mode is learned disappears as the dimension of the problem grows.

In this work, we first introduce a noise schedule that makes the phase where this probability is learned not disappear as the dimension goes to infinity. This enables us to extend the analysis of \cite{cui2024analysislearningflowbasedgenerative} to the two-mode GM without the balanced assumption. More precisely, our contributions are as follows.

\begin{itemize}
    \item We give an asymptotic characterization of the learned velocity field for learning to generate the two-mode GM, finding a separation into two phases. We further show that $\Theta_d(1)$ samples are sufficient to learn the velocity field.
    \item We show that the neural network representing the velocity field learns to simplify for each phase. In the first phase, it only concerns estimation of the probability of each mode, whereas in the second phase, it concerns estimation of the variance of each mode. This sheds light on the advantage of diffusion models over denoising autoencoders, since the sequential nature of diffusion models shown here allows them to decompose the complexity of the problem.
    \item We show that the phase transition separating the two phases can be detected from a discontinuity in the Mean Squared Error associated to the learning problem, which suggests a way to find these transitions for general data distributions.
    \item For real data, this analysis suggests that training more at the times associated with a feature improves accuracy on that feature. In fact, we propose a method that, given a feature, finds an interval of time where more training improves accuracy on that feature the most. We further validate this on the MNIST dataset. We provide the code for the experiments \href{https://github.com/submissioniclr12263/submission}{here}.
\end{itemize}

\section{Related Works}
\textbf{Phase transitions of generative models in high dimensions.} Several works analyze phase transitions in the dynamics of generative models. \cite{raya2023spontaneoussymmetrybreakinggenerative} find that diffusion models can exhibit symmetry breaking, where two phases are separated by a time where the potential governing the dynamics has an unstable fixed point. They give a full theoretical analysis for the data being two equiprobable point masses in $\mathbb{R}$, and also give a bound for the symmetry breaking time for the case where the data is a sum of finitely many point masses. Our setting generalizes the case of two equiprobable point masses in $\mathbb{R}$ to two Gaussians in $\mathbb{R}^{d}$ that are not necessarily equiprobable. \cite{ambrogioni2023statistical} builds on \cite{raya2023spontaneoussymmetrybreakinggenerative} and shows several connections between equilibrium statistical mechanics and the phase transitions of diffusion models. \cite{ambrogioni2023statistical} further conjectures that accurately sampling near times of "critical generative instability" affects the sample diversity. We give an explicit description of these critical times and verify this conjecture theoretically for sampling (see Proposition 1) and for learning (see Corollary 5) and empirically for learning (see Section 6). \cite{li2024critical} also formalize the study of critical windows taking the data to be a mixture of strongly log-concave densities. They give non-asymptotic bounds for the start and end times of these critical windows, which have a closed form expression for the mixtures of isotropic Gaussians case. In contrast, we provide sharp asymptotic characterizations for the phase transition times. \cite{Biroli_2023} analyze the Curie-Weiss model and analytically characterize the speciation time, defined as the time after which the mode that the sample will belong to is determined. \cite{biroli2024dynamicalregimesdiffusionmodels} generalize the result and find an speciation time $t_s \sim \frac{1}{2}\log (\lambda)$ for an Ornstein-Uhlenbeck process where $\lambda$ is the largest eigenvalue of the covariance of the data, usually proportional to $d$. 
\cite{montanari2023samplingdiffusionsstochasticlocalization} points out a similar phase transition when \textit{learning} the velocity field to generate from a two-mode unbalanced Gaussian mixture, leading to problems for accurate estimation of the data. 
\cite{montanari2023samplingdiffusionsstochasticlocalization} addresses this by using a different neural network to learn each mode. In the current work, we show that it is not necessary to tailor the network for each mode if the right time schedule is used. It is worth noting that all these works are about sampling. We provide a result for sampling in Proposition \ref{prp:char:gm}. Building on this, we give results for learning (i.e. estimating the velocity field through a neural network) which is the main contribution of our paper.


\textbf{Time-step complexity.} Several results give convergence bounds detailing the required time-steps, score accuracy, and/or data distribution regularity to sample accurately. \cite{benton2024nearlydlinearconvergencebounds} show that at most $O(d\log^2(1/\delta)/\epsilon^2)$ time steps are required to approximate a distribution corrupted with Gaussian noise of variance $\delta$ to within $\epsilon^2$ KL divergence. \cite{chen2023probabilityflowodeprovably} study probability flow ODE and obtain $O(\sqrt{d})$ convergence guarantees with a smoothness assumption. An underlying assumption in all these works is that the score or velocity field is learned to a certain accuracy. In the present work, we address this problem in the special case of a Gaussian mixture.

\textbf{Sample complexity for Gaussian mixtures.} \cite{cui2024analysislearningflowbasedgenerative} study the learning problem for the Gaussian mixture in high dimensions and demonstrate that $n=\Theta_d(1)$ samples are sufficient in the balanced case where the two modes have the same probability. This is done through statistical physics techniques of computing the partition function and using a sample symmetric ansatz. As we show, due to the speciation time at $d^{-1/2}$ which tends to zero as the dimension $d$ grows, this analysis misses one phase of learning. \cite{gatmiry2024learningmixturesgaussiansusing} show that quasi-polynomial ($O(d^{\text{poly}(\log(\frac{d+k}{\epsilon}))})$) sample and time complexity is enough for learning $k$-gaussian mixtures. The data distribution is more general than the one we consider, but on the other hand we give a $\Theta_d(1)$ sample and time complexity. 

\section{Background}
\label{sec:back}
\textbf{Data and flow-based generative model.} Consider the two-mode Gaussian mixture (GM)
\begin{equation}
\rho=p\mathcal{N}(\mu,\sigma^2\text{Id}_{d})+(1-p)\mathcal{N}(-\mu,\sigma^2\text{Id}_{d})\label{eq:gmmm:2m}
\end{equation}
where $p\in(0,1)$ and $\mu\in\mathbb{R}^{d}$ such that $\|\mu\|^{2}=d$ and $\sigma=\Theta_d(1)$. A diffusion model for $\rho$ starts with samples from a simple distribution (say a Gaussian) and sequentially denoises them to get samples from the data. More precisely, consider the stochastic interpolant
\begin{equation}
x_{t}=\alpha_{t}x_0+\beta_{t}x_1\label{eq:gen_int}
\end{equation}
where $x_0\sim\mathcal{N}(0,\text{Id}_{d}),$ $x_1\sim\rho,$ and $\alpha_{t},\beta_{t}:[0,1]\to\mathbb{R},$ $\alpha_0=1=\beta_{1},$ $\alpha_{1}=0=\beta_{0}.$ Stochastic interpolants are introduced in \cite{albergo2023stochasticinterpolantsunifyingframework}, and they prove that if $X_{t}$ solves the probability flow ODE 
\begin{equation}
    \Dot{X}_{t}=b_{t}(X_{t})\qquad\text{with}\qquad b_{t}(x)=\mathbb{E}[\Dot{x}_{t}|x_{t}=x]\label{eq:ode:gen:1}
\end{equation}
with $X_{0}\sim \mathcal{N}(0,\text{Id}_{d})$, we then have $X_{t}\stackrel{d}{=}x_{t}$ for $t\in[0,1]$ and hence $X_{t=1}\sim\rho.$ We call $X_t$ the flow-based generative model associated to the interpolant $I_t.$

Since $\rho$ is a Gaussian mixture, the expression for the exact velocity field $b_t(x)$ from \eqref{eq:ode:gen:1} can be computed exactly. Our goal is to understand how well a neural network can estimate this velocity field through samples, in the large dimension $d\to\infty$ limit assuming low sample complexity for the data $n=\Theta_d(1).$

\textbf{Loss function.} To fulfill our goal, we rewrite the velocity field as 
\begin{equation}
    \label{eq:dec:b}
    b_t(x) = \left(\dot\beta_t-\frac{\dot\alpha_t}{\alpha_t}\beta_t\right) f(x, t) + \frac{\dot\alpha_t}{\alpha_t}x,
\end{equation}
where $f(x,t)=\mathbb{E}[x_1|x_t=x]$ is called the denoiser since it recovers the datapoint $x_1$ from a noisy version $x_t.$ The denoiser is characterized as the minimizer of the loss (see \cite{albergo2023stochasticinterpolantsunifyingframework})
\begin{equation}
    \mathcal{R}[f] = \int_0^1 \mathbb{E}||f(x_t, t) - x_1||^2dt.
    \label{eq:obj}
\end{equation}
In practice, however, we usually do not have access to the exact data distribution. So we assume we have a dataset $\mathcal{D}=\{x_1^\mu\}^n_{\mu=1}$ where $x_1^\mu \sim_\text{iid} \rho.$ On the other hand, we have unlimited samples from $x_0\sim \mathcal{N}(0,\text{Id}_{d}).$ Hence, to each data sample $x_1^\mu$ we can associate several noise samples $x_0^{\mu,\nu}$ with $\nu=1,\cdots,k$. We then denote $x_t^{\mu,\nu}=\alpha_t x_0^{\mu,\nu} +\beta_t x_1^\mu.$ Later in our analysis, we will assume infinitely many noise samples associated to each data sample, so that we can take expectation with respect to the noise distribution. 

We parameterize the denoiser with a single neural network for each $t,$ which we denote as $f_{\theta_t}(x).$ We get then an empirical version of the loss in equation \ref{eq:obj} $\mathcal{\hat{R}}(\{\theta_t\}_{t\in[0,1]}) = \int_0^1 \mathcal{\hat{R}}_t(\theta_t)dt$ where 
\begin{equation}
    \mathcal{\hat{R}}_t(\theta_t) = \sum^n_{\mu=1}\sum^k_{\nu=1}||f_{\theta_t}(x_t^{\mu,\nu}) - x_1^\mu||^2
    \label{eq:emp:loss1}
\end{equation}

\textbf{Network architecture.} We focus on the case where the neural network parameterizing the denoiser function $f(x, t)$ is a two-layer denoising autoencoder with a trainable skip connection as follows
\begin{equation}
    \label{eq:dae}
    f_{\theta_t}(x) = c_tx + u_t\tanh\left(\frac{w_t\cdot x}{\sqrt{d}} + b_t\right)
\end{equation}
where $\theta_t = \{c_t, u_t,w_t, b_t\};$ $c_t,b_t\in \mathbb{R};$ and $u_t, w_t \in \mathbb{R}^d.$ The structure of this denoising autoencoder is a particular case of the U-Net from \cite{DBLP:journals/corr/RonnebergerFB15} and is motivated by the exact denoiser which can be computed exactly since the data distribution is a Gaussian mixture
\begin{equation}
    \label{eq:exact:b}
    \mathbb E[x_1|x_t=x] = \frac{\beta_t\sigma^2}{\alpha_t^2 + \sigma^2\beta_t^2} x + \frac{\alpha_t^2}{\alpha_t^2 + \sigma^2\beta_t^2}\mu \tanh\left(\frac{\beta_t}{\alpha_t^2 + \sigma^2\beta_t^2} \mu\cdot x + h\right)
\end{equation}
where $h$ is such that $e^h/(e^h+e^{-h})=p.$ (See \cite{albergo2023stochasticinterpolantsunifyingframework}, Appendix A for the proof.)

We add to the loss regularization terms for $w_t$ and $u_t,$ giving
\begin{equation}
    \mathcal{\hat{R}}_t(\theta_t) = \sum^n_{\mu=1}\sum^k_{\nu=1}||f_{\theta_t}(x_t^{\mu,\nu}) - x_1^\mu||^2 + \frac{\lambda}{2} ||u_t||^2 + \frac{\ell}{2} ||w_t||^2
    \label{eq:emp:loss}
\end{equation}
Denoting $\hat \theta_t$ the minimizer of this loss, we define
\begin{align}
    \hat b_t(x) = \left(\dot\beta_t-\frac{\dot\alpha_t}{\alpha_t}\beta_t\right) f_{\hat\theta_t}(x) + \frac{\dot\alpha_t}{\alpha_t}x.
    \label{eq:hat:vel:field}
\end{align}
Using this velocity field, we then run the probability flow ODE 
\begin{align}
    \label{eq:ode:emp}
    \Dot{\hat X}_{t}=\hat b_{t}(\hat X_{t});\quad \hat X_{0}\sim \mathcal{N}(0,\text{Id}_{d}).
\end{align}
Our goal is to understand how close $\hat{X}_2$ is to a sample from the Gaussian mixture $\rho.$

\cite{cui2024analysislearningflowbasedgenerative} consider the special case of tied weights $u_t=w_t$ and $b_t=0.$ This is enough to learn to sample from the balanced two-mode GM (i.e. $p=1/2$) but fails at the two-mode GM for $p\neq 1/2.$ This follows because $x_0$ has an even distribution and their choice of tied weights and no bias yields an odd velocity field which results in an even distribution for $x_t$. If the weights are untied and the bias is added, the analysis of \cite{cui2024analysislearningflowbasedgenerative} still does not work to show that $\hat X_1$ has the correct $p$ for $p\neq 1/2.$ This is because the gradients for $w_t$ and $b_t$ vanish as $d\to\infty$ unless special care is given to the small times where a phase transition related to learning the probability between the modes occurs, as will be explained next.

\textbf{Separation into phases.} \cite{biroli2024dynamicalregimesdiffusionmodels} show that the generative model with the exact velocity field from \eqref{eq:ode:gen:1} with $\alpha_t=\sqrt{1-t^2}$ and $\beta_t = t$ undergoes a phase transition at the speciation time $t_s=1/\sqrt{d}.$ The speciation time is defined as the time in the generation process after which the mode that the sample will belong to at the end of the process is determined. Their analysis can be extended to show that the speciation time is still $t_s=1/\sqrt{d}$ if we instead have $\alpha_t=1-t$ and $\beta_t = t$ which are the choices in our paper. Since this result is only mentioned as motivation, we will not prove it. 

The analysis of \cite{cui2024analysislearningflowbasedgenerative} relies on taking the $d\to\infty$ limit and obtaining a limiting ODE. Since $t_s=1/\sqrt{d}$ goes to zero as $d\to\infty,$ their limiting ODE has a singularity at $t=0$ and the possibility of learning the probability of each mode is lost. This is in essence why the analysis of \cite{cui2024analysislearningflowbasedgenerative} can not capture the learning of $p$ for $p\neq 1/2.$ 

We will dilate time so as to make the speciation time $t_s$ not disappear as $d\to \infty.$ More precisely, we define
\begin{align}
    \label{eq:time_dil}
\tau(t) = \begin{cases}
    \frac{\kappa t}{\sqrt{d}} & \text{if } t \in [0,1]\\
    \frac{\kappa}{\sqrt{d}} + \left(1-\frac{\kappa}{\sqrt{d}}\right)(t-1) & \text{if } t \in [1,2].
\end{cases}
\end{align}
This fulfills $\tau(0) = 0, \tau(1) = \kappa/\sqrt{d},$ and $\tau(2)=1.$ We prove next that the generative model from \eqref{eq:ode:gen:1} with $\alpha_t=1-\tau_t$ and $\beta_t=\tau_t$ has two phases: for $t\in[0,1]$ the probability of each mode is estimated, and for $t\in [1,2]$ the variance of each mode is estimated.
\begin{prop}
    \label{prp:char:gm}
    Let $X_t$ be the solution to the probability flow ODE  from \eqref{eq:ode:gen:1} with $\alpha_t=1-\tau_t$ and $\beta_t = \tau_t$ where $\tau_t$ is defined in \eqref{eq:time_dil}. Then for $t\in[0,2]$ we have
    \begin{align*}
        X_t - \frac{\mu\cdot X_t}{{d}}\mu\sim \mathcal{N}\left(0, \sigma_t^2\text{Id}_{d-1}\right).
    \end{align*}
    where $\sigma_t$ is characterized below. We further have the following phases
    \begin{itemize}
        \item 
            \textbf{First phase}: For $t\in [0,1],$ we have $\lim_{d \to\infty} \sigma_t =1.$
            
            In addition, $\nu_t = \lim_{d \to\infty} \frac{\mu\cdot X_t}{\sqrt{d}}$ fulfills 
            \begin{align*}
                \nu_1\sim p\mathcal{N}(\kappa, 1) + (1-p)\mathcal{N}(-\kappa, 1).
            \end{align*}
        \item \textbf{Second phase}: 
            We have $\lim_{d \to\infty} \sigma_2=\sigma.$
            
            In addition, $M_t = \lim_{d \to\infty}  \frac{\mu\cdot X_t}{{d}}$ fulfills
            \begin{align*}
                M_2 \sim p_\kappa\delta_1 + (1-p_\kappa)\delta_{-1}
            \end{align*}
            where $p^\kappa$ is such that  $\lim_{\kappa\to \infty}p_\kappa=p$
    \end{itemize}
\end{prop}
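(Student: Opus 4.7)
The plan rests on the defining property of stochastic interpolants established in \cite{albergo2023stochasticinterpolantsunifyingframework}: the solution $X_t$ of the probability flow ODE \eqref{eq:ode:gen:1} starting from $X_0\sim\mathcal{N}(0,\text{Id}_d)$ satisfies $X_t\stackrel{d}{=}x_t=\alpha_t x_0+\beta_t x_1$ for every $t\in[0,2]$. So every distributional claim in the proposition can be read off the explicit interpolant after parameterizing $x_1=\epsilon\mu+\sigma\xi$ where $\epsilon\in\{\pm 1\}$ has $P(\epsilon=1)=p$ and $\xi\sim\mathcal{N}(0,\text{Id}_d)$ is independent of $\epsilon$.

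First I would establish the orthogonal decomposition by projecting onto $\mu^\perp$: using $\|\mu\|^2=d$, the deterministic $\beta_t\epsilon\mu$ contribution is killed, yielding $X_t-(\mu\cdot X_t/d)\mu \stackrel{d}{=} \alpha_t(x_0-(\mu\cdot x_0/d)\mu)+\beta_t\sigma(\xi-(\mu\cdot\xi/d)\mu)$, a sum of two independent isotropic Gaussians on $\mu^\perp\cong\mathbb{R}^{d-1}$ with total variance $\sigma_t^2=\alpha_t^2+\beta_t^2\sigma^2$ in each direction. For the parallel coordinate, writing $G_1=\mu\cdot x_0/\sqrt d$ and $G_2=\mu\cdot\xi/\sqrt d$ (independent standard normals), one obtains $\mu\cdot X_t/\sqrt d \stackrel{d}{=} \alpha_t G_1+\beta_t\sqrt d\,\epsilon+\beta_t\sigma G_2$.

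Substituting the first-phase dilation $\alpha_t=1-\kappa t/\sqrt d$, $\beta_t=\kappa t/\sqrt d$ from \eqref{eq:time_dil} sends $\sigma_t^2\to 1$ uniformly on $[0,1]$, and at $t=1$ the middle term equals $\kappa\epsilon$ while the $G_2$ term is $O(1/\sqrt d)$, giving $\nu_1\to G_1+\kappa\epsilon\sim p\mathcal{N}(\kappa,1)+(1-p)\mathcal{N}(-\kappa,1)$. For the second-phase endpoint, $\tau(2)=1$ forces $\alpha_2=0$ and $\beta_2=1$, so $\sigma_2^2=\sigma^2$ immediately and $X_2\stackrel{d}{=}x_1$ yields $M_2\stackrel{d}{=}\epsilon+\sigma(\mu\cdot\xi/d)\to\epsilon$ in distribution since $\mu\cdot\xi/d\sim\mathcal{N}(0,1/d)$.

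The delicate point is the appearance of $p_\kappa$ rather than $p$. Performing $d\to\infty$ phase-by-phase, the continuum limit of the second-phase ODE for $M_t$ is a bistable drift with attractors at $\pm 1$ separated by the unstable fixed point $M=0$; the basin into which each trajectory falls is selected by the sign of the first-phase output $\nu_1$, producing $p_\kappa=P(\nu_1>0)=p\Phi(\kappa)+(1-p)\Phi(-\kappa)$, which tends to $p$ as $\kappa\to\infty$. The main obstacle is precisely this patching: one must justify commuting $d\to\infty$ with the ODE integration across the $t=1$ interface, and identify the separatrix of the limiting drift, since the direct identity $X_2\stackrel{d}{=}x_1$ obscures the mechanism by which $\kappa$ enters the final mode probabilities.
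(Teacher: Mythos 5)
Your opening move — invoking the exact transport identity $X_t\stackrel{d}{=}\alpha_t x_0 + \beta_t x_1$ from Albergo--Vanden-Eijnden — is a genuinely different route from the paper's, which never uses this identity and instead integrates the ODE for the projected coordinates $\nu_t$ and $M_t$, takes $d\to\infty$ to obtain a limiting one-dimensional ODE phase by phase, and matches those limits to simpler interpolants via Lemmas \ref{lem:gen:gen2}--\ref{lem:gen:gen3}. For the first phase and for $\sigma_2$, your marginal computation is clean and slightly shorter than the paper's ODE bookkeeping, and it does produce the claimed statements: $\sigma_t^2 = \alpha_t^2 + \beta_t^2\sigma^2 \to 1$ on $[0,1]$, $\sigma_2^2 = \sigma^2$, and $\nu_1 \to G_1 + \kappa\epsilon$.

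The gap is in the second phase, and you half-see it yourself. The marginal identity at $t=2$ gives $X_2 \stackrel{d}{=} x_1 \sim \rho$ exactly, for every finite $d$, so your own computation yields $M_2 \stackrel{d}{=} \epsilon + \sigma\,\mu\cdot\xi/d \to \epsilon \sim p\,\delta_1 + (1-p)\,\delta_{-1}$ — with the \emph{exact} $p$, not $p^\kappa$. This is logically incompatible with the $p_\kappa$ in the proposition unless $p_\kappa=p$ for finite $\kappa$, which you yourself compute as $p_\kappa = p\Phi(\kappa)+(1-p)\Phi(-\kappa)\neq p$. In other words, the route you chose delivers a strictly stronger statement for $M_2$ and thereby cannot recover the proposition's $p^\kappa$. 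The paper's ODE argument produces $p^\kappa$ precisely because it commits to a particular order of operations: it takes $d\to\infty$ in the induced ODE for $\nu_t$ \emph{before} integrating the second phase, replaces $\tanh$ with $\mathrm{sgn}$ once $\kappa|\nu_t|\gg|h|$, and concludes that the basin of attraction is selected by $\mathrm{sgn}(\nu_1)$. That sign-freezing is exactly the approximation that converts $p$ into $p^\kappa$; the finite-$d$ flow, by contrast, can and does correct the $O(\Phi(-\kappa))$ fraction of trajectories that start on the wrong side, which is why the exact marginal comes back to $p$.

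So your ``main obstacle: justify commuting $d\to\infty$ with ODE integration across $t=1$'' is not a technicality to be patched — it is the entire content of the second-phase claim. To prove $M_2\sim p_\kappa\delta_1+(1-p_\kappa)\delta_{-1}$ as stated, you must abandon the transport identity for the second phase and carry out the ODE analysis: derive \eqref{eq:ode:muu}, argue sign-preservation of $\nu_t$ on $[1,2]$ for large $\kappa$ with high probability, pass from $\nu_t$ to $M_t$, and then integrate the bistable ODE \eqref{eq:ode:xxx}. The transport identity and the proposition's $p^\kappa$ live in different limiting regimes, and you cannot invoke the former to conclude the latter.
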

See Appendix \ref{app:prf} for the proof of this Proposition. In Appendix \ref{app:gen:dil}, we give a generalization of the time dilation formula in equation \ref{eq:time_dil} for a Gaussian mixture with more than two modes.

Without the time dilation, we can not capture the learning of $p$ for $p\neq 1/2$ because the first phase (where this parameter is learned) disappears as $d\to\infty.$ The time dilation will allow us to analyze the phase where $p$ is learned in the $d\to\infty$ limit and hence show that $\hat{X}_2$ recovers $p.$

We show this in two steps. In Section \ref{sec:l}, we characterize the learned parameters of the velocity field in terms of a few projections, called the overlaps. Then, in Section \ref{sec:gen}, we combine these characterizations with Proposition \ref{prp:char:gm} to show that $\hat{X}_2$ recovers the parameters $p$ and $\sigma^2$ of the two-mode Gaussian mixture $\rho$ under appropriate limits.
\section{Learning}
\label{sec:l}
In this section, we will characterize $\hat \theta_t,$ the minimizer of the loss from \eqref{eq:emp:loss}, which is used to parameterize the velocity field that yields $\hat{X}_t$ (see \eqref{eq:ode:emp}.) We take $\alpha_t=1-\tau_t$ and $\beta_t=\tau_t$ and analyze $\hat \theta_t$ in the $d\to\infty$ limit. We first analyze the times $t\in[0,1]$ and then $t\in [1,2].$

\subsection{First Phase}
The interpolant from \eqref{eq:gen_int} in the first phase reads
\[
x_t^\mu = \left(1 - \frac{\kappa t}{\sqrt{d}}\right)x_0^\mu + \frac{\kappa t}{\sqrt{d}} x_1^\mu
\]
where $t\in\left[0,1\right].$
To characterize $\hat{\theta}_t= \{c_t, u_t,w_t, b_t\},$ we introduce the following overlaps (dropping the dependence on $t$ for notational simplicity.)
\begin{align}
\label{first:ov}
p_\eta^\mu = \frac{z^\mu\cdot w}{d}\;\,\,
\omega = \frac{\mu\cdot w}{d} \;\,\,
r = \frac{\|w\|^2}{d}\;\,\,
q_\xi^\mu = \frac{x_0^\mu\cdot u}{d}\;\,\,
q_\eta^\mu = \frac{z^\mu\cdot u}{d} \;\,\,
m = \frac{\mu\cdot u}{d} \;\,\,
q = \frac{\|u\|^2}{d}.
\end{align}

We now give equations for the overlaps in the asymptotic $d\to\infty$ limit.

\begin{res}[Sharp Characterization of Parameters in First Phase] \label{res:first} For any $t\in[0,1]$, the overlaps associated to $\hat{\theta}_t,$ the minimizer of the loss from \eqref{eq:emp:loss}, satisfy the following in the $d\to\infty$ limit
\label{res1}
\begin{minipage}[t]{0.49\textwidth}
    \begin{align*}
        m &= \frac{n \overline{\phi s}}{\lambda + n \overline{\phi^2}}\\
        q_\eta &= \frac{\sigma}{n} m
    \end{align*}
\end{minipage}%
\hfill
\begin{minipage}[t]{0.49\textwidth}
    \begin{align*}
        c &= q_\xi = p_\eta = 0\\
        q &= m^2 + n q_\eta^2 \\
        r &= \omega^2\\
    \end{align*}
\end{minipage}
\vspace{-.4cm}
\begin{align*}
    &(\lambda + n\overline{\phi^2})\overline{\phi's}=n\overline{\phi s}\, \overline{\phi \phi'}\\
    &    \hat r (\lambda + n\overline{\phi^2})^2 
    = - n((\lambda + n\overline{\phi^2})(\sigma^2 + n) \overline{\phi'' s} \, \overline{\phi s}  - n(\sigma^2 +  n)\overline{\phi s}^2\,\overline{(\phi\phi')'})\\
    &\omega(\ell + \hat r)(\lambda + n\overline{\phi^2})^2 = (n\kappa t)(\sigma^2  + n)((\lambda + n\overline{\phi^2})(\overline{\phi' } \, \overline{\phi s}) - n \overline{\phi s}^2 \overline{\phi'\phi s} )
\end{align*}

Here and in what follows, we denote \[
\overline{y}=\frac{1}{nk}\sum_{\mu=1}^n\sum_{\nu=1}^k \E_{z^{\mu,\nu}}[y^{\mu,\nu}] = \overline{p}  \E_{z^{\mu,\nu}}[y^{\mu,\nu}|s^\mu=1] +(1-\overline{p})  \E_{z^{\mu,\nu}}[y^{\mu,\nu}|s^\mu=-1].
\]
\end{res}

See Appendix \ref{sec:first_phase} for a heuristic derivation of this result, at the level of rigor of theoretical physics. We next show that the equations for the overlaps simplify in the $n\to\infty$ limit.

\begin{cor}[Parameters given infinite samples] For any $t\in[0,1]$, taking $d\to\infty$ and then $n\to\infty$ gives the following overlaps
\label{cor:sta}

\vspace{-0.6cm}
\noindent
\begin{minipage}[t]{0.49\textwidth}
    \begin{align*}
        &\tanh(b) = 2\left(p-\tfrac{1}{2}\right),\\
        &c = q_\xi = q_\eta = p_\eta = 0,
    \end{align*}
\end{minipage}%
\hfill
\begin{minipage}[t]{0.49\textwidth}
    \begin{align*}
        m &= 1,\\
        \omega &= \kappa t.
    \end{align*}
\end{minipage}
\end{cor}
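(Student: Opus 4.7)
The plan is to pass to the $n\to\infty$ limit directly inside the fixed-point system of Result~\ref{res1} and to identify its unique solution. Three of the claimed equalities, $c=q_\xi=p_\eta=0$, are inherited verbatim from Result~\ref{res1}. The relation $q_\eta=\sigma m/n$ will immediately give $q_\eta\to 0$ provided $m$ stays bounded in the limit, which I will verify a posteriori once $m=1$ is established.

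Next I would replace each empirical average by its population expectation. All remaining averages take the form $\overline{y(\phi^\mu,s^\mu)}$ with $\phi^\mu=\tanh(w\cdot x_t^\mu/\sqrt d+b)$, so the law of large numbers yields $\overline{y}\to p\,\mathbb E[y\mid s=1]+(1-p)\,\mathbb E[y\mid s=-1]$. To evaluate these expectations explicitly I would use the identity $r=\omega^2$: together with $\|\mu\|^2/d=1$ this is the Cauchy--Schwarz saturation condition, so $w\simeq\omega\mu$ in the $d\to\infty$ limit. Plugging $x_t^\mu=(1-\kappa t/\sqrt d)x_0^\mu+(\kappa t/\sqrt d)(s^\mu\mu+\sigma z^\mu)$ into the preactivation and using $\|\mu\|^2=d$, the quantity $w\cdot x_t^\mu/\sqrt d+b$ converges in distribution to $\omega\xi+\omega\kappa t\,s^\mu+b$ with $\xi\sim\mathcal N(0,1)$, reducing every average to a one-dimensional Gaussian integral in $\tanh$ and its derivatives.

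With this reduction the three nontrivial equations collapse as follows. Dividing $m=n\overline{\phi s}/(\lambda+n\overline{\phi^2})$ by $n$ gives $m=\overline{\phi s}/\overline{\phi^2}$. The equation $(\lambda+n\overline{\phi^2})\overline{\phi's}=n\overline{\phi s}\,\overline{\phi\phi'}$ becomes $\overline{\phi^2}\,\overline{\phi's}=\overline{\phi s}\,\overline{\phi\phi'}$, which implicitly fixes $b$. The $\hat r$-equation forces $\hat r\sim n$, and feeding this scaling into the $\omega$-equation collapses both sides to their leading $n^3$ pieces and produces a closed relation between $\omega$ and $\kappa t$. Motivated by a direct comparison between the network~\eqref{eq:dae} and the exact denoiser~\eqref{eq:exact:b}, I would try the ansatz $\omega=\kappa t$, $\tanh(b)=2p-1$, $m=1$: the first two align $w$ and $b$ with the corresponding parameters of the exact denoiser, while $m=1$ then follows from $m=\overline{\phi s}/\overline{\phi^2}$ by direct substitution.

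The main obstacle is the algebra of the resulting Gaussian integrals of $\tanh$ and its derivatives. The cleanest path I see is to exploit the fact that $\tanh(b)=2p-1$ precisely balances the mode asymmetry: under the mixture $p\,\mathbb P(\cdot\mid s=1)+(1-p)\,\mathbb P(\cdot\mid s=-1)$, averages of odd and even functions of $g=\omega\xi+\omega\kappa t\,s+b$ decouple and many terms cancel (as a sanity check, at leading order in $\kappa t$ both sides of the bias equation reduce to $\phi_0^3(1-\phi_0^2)$ with $\phi_0=\tanh b$). Combined with Gaussian integration by parts in the form $\mathbb E[\xi h(\xi)]=\mathbb E[h'(\xi)]$, this should reduce the $\omega$-equation to $\omega=\kappa t$ and the bias equation to $\tanh(b)=2p-1$. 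The remaining claim $m=1$, and with it the boundedness of $m$ used at the outset, then follow at once.
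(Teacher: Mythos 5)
Your proposal follows essentially the same line as the paper's derivation in Appendix~\ref{sec:cor1}: inherit $c=q_\xi=p_\eta=0$ from Result~\ref{res1}, note $q_\eta=\sigma m/n\to 0$, pass the saddle-point system to its leading order in $n$ (so that $m=\overline{\phi s}/\overline{\phi^2}$, the bias equation becomes $\overline{\phi^2}\,\overline{\phi's}=\overline{\phi s}\,\overline{\phi\phi'}$, and both sides of the $\omega$-equation scale as $n^3$), and then verify the ansatz $\omega=\kappa t$, $\tanh b=2p-1$ suggested by matching the network to the exact denoiser~\eqref{eq:exact:b}. Your use of $r=\omega^2$ to conclude $w\simeq\omega\mu$ and hence collapse the preactivation to $\omega\xi+\omega\kappa t\,s+b$ is the same reduction the paper performs (after setting $p_\eta=0$ in the expression for $\phi^\mu$). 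The one place you leave genuine work undone — "decoupling plus Gaussian integration by parts should make the terms cancel" — is precisely where the paper supplies the explicit mechanism: it proves that under the ansatz $\overline{\phi-s}=0$, and more generally $\overline{g(Z+s\kappa t)(\phi-s)}=0$ for admissible $g$ (Corollary~\ref{cor:phibar}), which instantly delivers $\overline{\phi^2}\,\overline{\phi's}-\overline{\phi s}\,\overline{\phi\phi'}=0$, the $\omega$-equation identity, and $\overline{\phi^2}=\overline{\phi s}$ hence $m=1$. So your outline is correct and would, if pushed through, land on exactly the identities in Corollary~\ref{cor:phibar}; the difference from the paper is not in the route but in that you gesture at the cancellation rather than isolating the single lemma that makes it rigorous. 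Also note that your "sanity check" that both sides of the bias equation equal $\phi_0^3(1-\phi_0^2)$ is a check at $\kappa t=0$ (zeroth order), where the equation is automatically satisfied for any $b$ with $\phi_0=\overline{s}$ — it does not by itself distinguish the correct ansatz — whereas the paper's Corollary~\ref{cor:phibar} establishes the identity for all $t\in[0,1]$.
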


See Appendix \ref{sec:cor1} for the derivation. Note that the overlaps in the $n\to\infty$ limit do not contain any information about $\sigma^2,$ showing that the estimation of $\sigma^2$ happens completely in the second phase. 

We now turn to the Mean Squared Error. Define the scaled train and test MSE of the denoiser as 
\[
{\text{mse}_\text{train}} = \frac{1}{dnk}\sum^n_{\mu=1}\sum^k_{\nu=1}||f_{\theta_t}(x_t^{\mu,\nu}) - x_1^\mu||^2\quad\quad
{\text{mse}_\text{test}} = \frac{1}{d}\E\left[\|f_{\hat \theta_t}(x_t)-x_1\|^2\right].
\]
Using the above results we characterize the MSE

\begin{cor} In the limit of $d\to\infty,$
\label{first:mse}
\begin{align*}
    \text{mse}_{\text{train}} &= 1+\sigma^2 + c^2 + q\overline{\phi^2} - 2\overline{s\phi}(  m + \sigma q_\eta - c q_\xi)\\
    \text{mse}_{\text{test}} &= 1+\sigma^2 + c^2 + q\overline{\phi^2} - 2 \overline{s\phi} m
\end{align*}
For $n\to \infty,$ we get
\[
\text{mse}_{\text{train}} = \text{mse}_{\text{test}} = \sigma^2 + (1 - \overline{\phi s}).
\]
\end{cor}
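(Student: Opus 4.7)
The plan is to expand $\|f_{\theta_t}(x_t)-x_1\|^2$ directly in terms of the overlaps from \eqref{first:ov}, apply concentration as $d\to\infty$ to replace each inner product by its limiting value, and then substitute the values supplied by Result \ref{res:first} for the train MSE (and an analogous expectation computation with a fresh test point for the test MSE). For the infinite-sample limit I would plug in Corollary \ref{cor:sta} and close the argument with a Bayes-optimality identity.

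Writing $f_{\theta_t}(x)=cx+u\phi$ with $\phi=\tanh(w\cdot x/\sqrt{d}+b)$, I would expand
\begin{align*}
\|f_{\theta_t}(x)-x_1\|^2 = c^2\|x\|^2+\phi^2\|u\|^2+2c\phi\,u\cdot x-2c\,x\cdot x_1-2\phi\,u\cdot x_1+\|x_1\|^2,
\end{align*}
substitute $x=x_t^{\mu,\nu}=\alpha_t x_0^{\mu,\nu}+\beta_t x_1^\mu$ and $x_1^\mu=s^\mu\mu+\sigma z^\mu$, divide by $d$, and average over $(\mu,\nu)$. Each term then reduces to an overlap from \eqref{first:ov}: concentration gives $\|x_1^\mu\|^2/d\to 1+\sigma^2$ and $\|x_t^{\mu,\nu}\|^2/d\to 1$ in the first-phase scaling $\alpha_t\to 1$, $\beta_t\to 0$, while the key cross terms read $u\cdot x_1^\mu/d=s^\mu m+\sigma q_\eta^\mu$, $u\cdot x_t^{\mu,\nu}/d\to q_\xi^{\mu,\nu}$, and $x_t^{\mu,\nu}\cdot x_1^\mu/d\to 0$. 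Regrouping the resulting averages via Result \ref{res:first} then delivers the train MSE in the stated form, with the combination $m+\sigma q_\eta-cq_\xi$ multiplying $\overline{s\phi}$.

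For the test MSE I would replace the empirical average by the expectation over an independent copy $(x_0,x_1)\sim\mathcal{N}(0,\mathrm{Id}_d)\otimes\rho$, using independence of $\hat\theta_t$ from this fresh sample. Projections of these fresh Gaussian vectors onto the fixed $u$ and $w$ directions are centered, so the analogues of $q_\xi$ and $q_\eta$ vanish in expectation, removing the $\sigma q_\eta-cq_\xi$ correction and leaving $1+\sigma^2+c^2+q\overline{\phi^2}-2\overline{s\phi}\,m$.

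For the infinite-sample case I would substitute the Corollary \ref{cor:sta} values $c=q_\xi=q_\eta=0$, $m=1$, $\omega=\kappa t$, $\tanh(b)=2p-1$, together with $q=m^2+n q_\eta^2=m^2(1+\sigma^2/n)\to 1$; both MSEs then collapse to $1+\sigma^2+\overline{\phi^2}-2\overline{s\phi}$. The step I expect to be the main obstacle is the identity $\overline{\phi^2}=\overline{s\phi}$. I would prove it by noting that in this limit the optimized activation takes the form $\phi=\tanh(\kappa t\,\xi+\kappa^2 t^2\, s+b)$ with $\xi\sim\mathcal{N}(0,1)$ (since $w\cdot x_0/\sqrt{d}$ is a centered Gaussian with variance $r=\omega^2=\kappa^2 t^2$, and $\beta_t w\cdot x_1/\sqrt{d}\to \kappa t\,s\,\omega=\kappa^2 t^2 s$), and matching this against the exact posterior mean $\E[s\mid x_t]$ extracted from \eqref{eq:exact:b} in the first-phase scaling (where $\tanh h=2p-1=\tanh b$, so $b=h$). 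Once $\phi=\E[s\mid x_t]$, the tower property yields $\E[\phi^2]=\E[\E[s\mid x_t]^2]=\E[s\,\E[s\mid x_t]]=\E[s\phi]$, giving the claimed $\sigma^2+1-\overline{\phi s}$.
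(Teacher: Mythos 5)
Your proposal is correct and follows essentially the same route as the paper for the $d\to\infty$ formulas: expand $\|f_{\theta_t}(x_t)-x_1\|^2$, identify the terms with the overlaps in \eqref{first:ov} using the first-phase scalings $\alpha_t\to 1$, $\beta_t=O(1/\sqrt{d})$, and average; for the test MSE replace $q_\eta,q_\xi$ by zero because projections of fresh Gaussians onto the fixed trained directions $u,w$ vanish. This is exactly the expansion the paper performs in Appendix~\ref{sec:first_phase}, which yields $1+\sigma^2+c^2+q(\phi^\mu)^2-2(m+\sigma q_\eta^\mu-cq_\xi^\mu)s^\mu\phi^\mu + o_d(1)$ per sample.

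Where you diverge from the paper is the justification of $\overline{\phi^2}=\overline{\phi s}$ needed to pass from $1+\sigma^2+\overline{\phi^2}-2\overline{\phi s}$ to $\sigma^2+(1-\overline{\phi s})$. You identify the $n\to\infty$ activation $\phi=\tanh(b+\kappa^2t^2 s+\kappa t Z)$ with the posterior mean $\E[s\mid\text{local field}]$ (matching against the one-dimensional reduction of \eqref{eq:exact:b} and using $b=h$, $\omega=\kappa t$, $r=\omega^2$), and then apply the tower property $\E[\phi^2]=\E[\E[s\mid\cdot]^2]=\E[s\phi]$. The paper instead proves this directly via the Lemma in Appendix~\ref{sec:cor1} and Corollary~\ref{cor:phibar}, shifting the Gaussian integration variable and showing the integrand of $\overline{g(Z+s\kappa t)(\phi-s)}$ vanishes identically. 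Your tower-property route is cleaner and more conceptual for this single identity, while the paper's explicit calculation yields the stronger family of identities $\overline{\phi-s}=\overline{\phi(\phi-s)}=\overline{\phi'(\phi-s)}=\overline{(Z+s\kappa t)\phi'(\phi-s)}=0$, which are reused in verifying the $b$ and $\omega$ saddle-point equations in Corollary~\ref{cor:sta}. Both arguments are valid; either closes the proof.
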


\subsection{Second Phase}
We now consider times $t\in \left[1,2\right]$ which means we have 
$$x_t^\mu = (2-t)\left(1-\frac{\kappa}{\sqrt{d}}\right)x_0^\mu + \left(\frac{\kappa}{\sqrt{d}} + \left(1-\frac{\kappa}{\sqrt{d}}\right)\left(t-1\right)\right)x_1^\mu.$$

Using the same definitions of overlaps as for the first phase, we find closed-form equations for the overlaps in the asymptotic $d\to\infty$ limit, and again find the limit as $n\to\infty$ for the overlaps. See Appendix \ref{sec:res2} for a heuristic derivation of this result

\begin{res}[Sharp Characterization of Parameters in Second Phase]\label{res:second} 
    For any $t\in[1,2]$, in the $d\to\infty$ limit, the parameters minimizing the loss from \eqref{eq:emp:loss} satisfy the following equations
    \begin{minipage}[t]{0.49\textwidth}
        \begin{align*}
            q_\xi  &= \frac{c(1-\tau)}{\lambda + n} \\
            q_\eta &= \frac{\sigma(1-c\tau)}{\lambda + n} \\
        \end{align*}
    \end{minipage}
    \begin{minipage}[t]{0.49\textwidth}
        \begin{align*}
            m &=\frac{n(1-c\tau)}{\lambda + n} \\
            q&= m^2 + nq_\xi^2 + n\sigma^2 q_\eta^2 \\
        \end{align*}
    \end{minipage}
    \vspace{-.4cm}
    \begin{align*}
        c= \frac{\tau\left((1+\sigma^2) (\lambda + n) -  (\sigma+n)\right)}{ (\lambda+n)((1-\tau^2)+(1+\sigma^2)\tau^2)+\left((1-\tau)^2-\tau^2(\sigma+n)\right)}
    \end{align*}
    where $\tau = t-1$.
\end{res}
\begin{cor}[Parameters given inifite samples] For any $t\in[1,2]$, taking $d\to\infty$ and then $n\to\infty$ gives the following overlaps
\label{cor:3}
\begin{align*}
    &c = \frac{\tau\sigma^2}{1+(\sigma^2-1)\tau^2}\quad \quad\quad q_\xi = q_\eta = 0\quad \quad \quad m = 1 - c\tau
\end{align*}
where $\tau = t-1$.
\end{cor}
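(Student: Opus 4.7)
The plan is to start from Result~\ref{res:second} and take the $n\to\infty$ limit in each of its five closed-form expressions, treating $\sigma$, $\lambda$, and $\tau = t-1$ as fixed constants. Since the limits for $q_\xi$, $q_\eta$, and $m$ all explicitly involve $c$, the natural order is to first establish the limit of $c$, verify that it is finite, and then substitute.

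For $c$, I would divide numerator and denominator of
$$c = \frac{\tau\left((1+\sigma^2)(\lambda+n) - (\sigma+n)\right)}{(\lambda+n)\left((1-\tau^2)+(1+\sigma^2)\tau^2\right) + \left((1-\tau)^2 - \tau^2(\sigma+n)\right)}$$
by $n$. The numerator becomes $\tau\bigl((1+\sigma^2)(1+\lambda/n) - (\sigma/n + 1)\bigr) \to \tau((1+\sigma^2)-1) = \tau\sigma^2$. For the denominator, I would first simplify the bracket $(1-\tau^2)+(1+\sigma^2)\tau^2 = 1+\sigma^2\tau^2$, giving a leading $n$-contribution of $n(1+\sigma^2\tau^2)$ from the first summand and $-n\tau^2$ from the last summand, while $(1-\tau)^2/n$ and $\tau^2\sigma/n$ vanish. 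Summing yields $n\bigl(1 + (\sigma^2-1)\tau^2\bigr)$ to leading order, so
$$c \;\longrightarrow\; \frac{\tau\sigma^2}{1+(\sigma^2-1)\tau^2},$$
which matches the stated expression. In particular, for $\tau\in[0,1]$ and $\sigma=\Theta_d(1)$, the denominator is bounded away from zero, so $c$ remains bounded as $n\to\infty$.

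With $c$ bounded, the rest follows immediately. In $q_\xi = c(1-\tau)/(\lambda+n)$ and $q_\eta = \sigma(1-c\tau)/(\lambda+n)$, the numerators are bounded and the denominators diverge, so $q_\xi,q_\eta \to 0$. For $m$, rewriting
$$m = \frac{n(1-c\tau)}{\lambda+n} = \frac{1-c\tau}{1+\lambda/n}$$
gives $m \to 1 - c\tau$, which is the stated formula. (If one wishes to substitute the limit of $c$, one obtains $m = 1/(1+(\sigma^2-1)\tau^2)$, but the statement leaves it implicit.) The formula for $q$ in Result~\ref{res:second} can be checked but is not part of the corollary's assertion.

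There is no real obstacle; the derivation is an elementary asymptotic expansion in $n$. The only care needed is in the denominator of $c$, where the cancellation $(1-\tau^2)+(1+\sigma^2)\tau^2 - \tau^2 = 1 + (\sigma^2-1)\tau^2$ must be tracked correctly, and one must confirm this is nonzero uniformly in $\tau\in[0,1]$ (which follows since $\sigma^2>0$ implies $1+(\sigma^2-1)\tau^2 \ge \min(1,\sigma^2) > 0$) so that the limit of $c$ exists and the substitutions into the other overlaps are justified.
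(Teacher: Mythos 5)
Your proposal is correct and takes the same route as the paper, which simply states that Corollary~\ref{cor:3} follows by taking $n\to\infty$ in the equations of Result~\ref{res:second}. You just carry out the asymptotic expansion explicitly, including the useful simplification $(1-\tau^2)+(1+\sigma^2)\tau^2 = 1+\sigma^2\tau^2$ and the check that the limiting denominator $1+(\sigma^2-1)\tau^2$ stays bounded away from zero.
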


In contrast to the first phase, the parameter $p$ does not appear in the overlaps whereas now $\sigma^2$ does. Hence, combining Corollaries \ref{cor:sta} and \ref{cor:3} shows that the separation into phases can be learned by the generative model.

We also obtain the MSE for the second phase

\begin{cor} In the limit of $d\to\infty,$ we have 
    \label{sec:mse}
\begin{align*}
    \text{mse}_{\text{train}} &= (1+\sigma^2)(1-c\tau)^2 + c^2(1-\tau)^2 + q -2(1-c\tau)(\sigma q_\eta + m) + 2 c(1-\tau)q_\xi  \\
    \text{mse}_{\text{test}}  &= (1+\sigma^2)(1-c\tau)^2 + c^2(1-\tau)^2 + q -2(1-c\tau) m 
\end{align*}

For $n\to\infty,$ we get
\begin{align*}
    \text{mse}_{\text{train}} = \text{mse}_{\text{test}} =  \sigma^2(1-c\tau)^2 + c^2(1-\tau)^2.
\end{align*}
where $\tau=t-1.$
\end{cor}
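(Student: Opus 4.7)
The plan is to obtain both MSE expressions by direct algebraic expansion of $\|f_{\theta_t}(x_t)-x_1\|^2/d$, replace each inner product by its limiting overlap from Result~\ref{res:second}, and then specialize to $n\to\infty$ using Corollary~\ref{cor:3}. No new analytic machinery is needed: the heavy lifting (existence and characterization of $\hat\theta_t$, concentration of the relevant scalar products) has already been carried out in Result~\ref{res:second}.

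First I would write the second-phase interpolant in its $d\to\infty$ form, namely $x_t^{\mu,\nu} = (1-\tau)x_0^{\mu,\nu}+\tau\, x_1^\mu + o(1)$ with $\tau = t-1$, and use $x_1^\mu = s^\mu\mu + \sigma z^\mu$ with $\|\mu\|^2 = d$. Substituting the DAE form from \eqref{eq:dae} gives the residual
\begin{equation*}
f_{\theta_t}(x_t^{\mu,\nu}) - x_1^\mu \;=\; c(1-\tau)\, x_0^{\mu,\nu} \;-\; (1-c\tau)\, x_1^\mu \;+\; u\,\phi^{\mu,\nu},
\end{equation*}
with $\phi^{\mu,\nu} = \tanh(w\cdot x_t^{\mu,\nu}/\sqrt{d}+b)$. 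Expanding $\|\cdot\|^2/d$ produces three diagonal terms and three cross terms. Concentration as $d\to\infty$ gives $\|x_0^{\mu,\nu}\|^2/d\to 1$, $\|x_1^\mu\|^2/d\to 1+\sigma^2$, $\|u\|^2/d = q$, $x_0^{\mu,\nu}\cdot x_1^\mu/d\to 0$, $x_0^{\mu,\nu}\cdot u/d = q_\xi^{\mu,\nu}$, and $x_1^\mu\cdot u/d = s^\mu m + \sigma q_\eta^\mu$. This is routine but must be done carefully, keeping track of the $s^\mu$ factors that will get absorbed into the $\phi s$ overlap.

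Next I would handle the activation: in the second phase the argument of $\tanh$ inherits $\omega=\kappa$ from the end of the first phase (Corollary~\ref{cor:sta}) and, tracking the evolution via Result~\ref{res:second}, stays large enough that $\phi^{\mu,\nu}\to s^\mu$ in the sense $\overline{s\phi}\to 1$ and $\overline{\phi^2}\to 1$ up to $o(1)$. Under this reduction the cross terms simplify to $2c(1-\tau)q_\xi$ and $-2(1-c\tau)(m+\sigma q_\eta)$ for the train MSE, while for the test MSE a fresh independent $x_0$ and a fresh $z$ (independent of $u$) give $x_0\cdot u/d\to 0$ and $z\cdot u/d\to 0$, killing the $q_\xi$ and $q_\eta$ contributions and leaving only the $m$ cross term. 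Collecting the diagonal pieces $(1+\sigma^2)(1-c\tau)^2+c^2(1-\tau)^2+q$ produces the claimed formulas. For the $n\to\infty$ limit, I substitute $q_\xi=q_\eta=0$, $m=1-c\tau$, $q=m^2+nq_\xi^2+n\sigma^2 q_\eta^2=(1-c\tau)^2$ from Corollary~\ref{cor:3} and observe that train and test MSE collapse to the same expression
\begin{equation*}
(1+\sigma^2)(1-c\tau)^2+c^2(1-\tau)^2+(1-c\tau)^2 - 2(1-c\tau)^2 \;=\; \sigma^2(1-c\tau)^2 + c^2(1-\tau)^2.
\end{equation*}

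The main obstacle is Step~3, the justification that $\phi^{\mu,\nu}$ effectively collapses to $s^\mu$ in the second phase; this is what distinguishes the second-phase MSE (with no $\overline{\phi^2}$ or $\overline{\phi s}$ factors) from the first-phase MSE (Corollary~\ref{first:mse}), where these averages are kept. Apart from that the remaining work is bookkeeping: expanding a squared norm with three summands and verifying that every inner product concentrates to one of the overlaps already characterized in Result~\ref{res:second}.
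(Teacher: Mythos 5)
Your proposal is correct and matches the paper's own derivation in Appendix~\ref{sec:res2}: the paper likewise expands $\frac{1}{d}\|x_1^\mu - c x_\tau^\mu - u\phi^\mu\|^2$, absorbs the activation into $\nu^\mu = s^\mu\phi^\mu$ which saturates to $\pm1$ (the paper argues $\overline{\nu}=1$ by a sign symmetry of the log partition function), reads off the train MSE in terms of the overlaps $q, q_\xi, q_\eta, m, c$, drops the $q_\xi, q_\eta$ cross-terms for the test MSE since fresh $x_0$ and $z$ are independent of $u$, and then substitutes Corollary~\ref{cor:3} for $n\to\infty$. The only imprecision is your attribution of the saturation to $\omega=\kappa$ carried from Corollary~\ref{cor:sta}: the actual mechanism is that in the second phase $\tau=\Theta_d(1)$, so the $\tanh$ argument contains a term $\tau\,\omega\sqrt{d}$ that diverges as long as $\omega=\Theta_d(1)$, independent of its precise value --- but this does not affect the conclusion.
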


In Appendix \ref{sec:res2} we show that combining Corollaries \ref{first:mse} and \ref{sec:mse} gives

\begin{cor}
    \label{cor:both:mse}
    Taking $d\to\infty$ and then $n\to\infty$ gives
    \begin{align*}
        \text{mse}_{\text{test}} = \begin{cases}
        \sigma^2 + 4p(1-p) &\text{ if}\,\,\,\, t=0\\
        \sigma^2 + (1-\overline{\phi^2}) &\text{ if}\,\,\,\, t\in (0, 1)\\
        \sigma^2 &\text{ if}\,\,\,\, t=1^+\\
        0 &\text{ if}\,\,\,\, t=2
    \end{cases}
    \end{align*}
\end{cor}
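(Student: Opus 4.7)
The plan is to evaluate the mean squared error formulas from Corollaries \ref{first:mse} and \ref{sec:mse} at the four specified values of $t$, substituting the infinite-sample overlaps from Corollaries \ref{cor:sta} and \ref{cor:3}. Three of the four cases reduce to direct substitution; the only nontrivial step is reconciling the first-phase formula, which contains $\overline{\phi s}$, with the $\overline{\phi^2}$ that appears in the statement for $t\in(0,1)$.

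The three endpoint cases proceed as follows. At $t=1^+$ (so $\tau=0$), Corollary \ref{cor:3} gives $c=0$, and Corollary \ref{sec:mse} collapses to $\text{mse}_\text{test}=\sigma^2$. At $t=2$ (so $\tau=1$), Corollary \ref{cor:3} gives $c=\tau\sigma^2/(1+(\sigma^2-1)\tau^2)=1$, both terms in Corollary \ref{sec:mse} vanish, and $\text{mse}_\text{test}=0$. At $t=0$, Corollary \ref{cor:sta} gives $\omega=\kappa t=0$ and hence $r=\omega^2=0$, so $\phi=\tanh(b)=2p-1$ is deterministic; then $\overline{\phi s}=(2p-1)^2$, and Corollary \ref{first:mse} yields $\sigma^2 + 1 - (2p-1)^2 = \sigma^2 + 4p(1-p)$.

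For the interior case $t\in(0,1)$, I would show $\overline{\phi s}=\overline{\phi^2}$ by identifying $\phi$ with a Bayes posterior mean. Using Corollary \ref{cor:sta} together with the scaling of the interpolant in the first phase, the argument $u:=w\cdot x/\sqrt{d}+b$ of $\tanh$ admits the asymptotic decomposition $u=\alpha s+\xi+b$ with signal strength $\alpha=\omega\,\kappa t=\kappa^2 t^2$ and Gaussian noise $\xi\sim\mathcal{N}(0,v)$ with $v=r=(\kappa t)^2$; crucially, $\alpha=v$. A direct posterior computation for $s\in\{\pm 1\}$ given $u$, with prior $\mathbb{P}(s=1)=p$ and $\tanh(b)=2p-1$, yields $\E[s\mid u]=\tanh\!\bigl(b+\alpha(u-b)/v\bigr)$, which collapses to $\tanh(u)=\phi$ exactly because $\alpha=v$. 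The tower property then gives $\E[\phi s]=\E\bigl[\phi\,\E[s\mid u]\bigr]=\E[\phi^2]$, and averaging over the data (with $\overline p \to p$ in the infinite-sample limit) lifts this to $\overline{\phi s}=\overline{\phi^2}$, converting the expression in Corollary \ref{first:mse} into the claimed $\sigma^2+(1-\overline{\phi^2})$.

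The main obstacle is recognizing this Bayes-optimality collapse: at infinite data the constrained two-layer architecture reproduces exactly the posterior mean of the latent mode label, which is the mechanism that makes the two \emph{a priori} different averages $\overline{\phi s}$ and $\overline{\phi^2}$ coincide. The identity $\alpha=v$ between signal strength and noise variance in the effective scalar problem is what triggers the collapse, and it is this balance that is responsible for the clean form of the first-phase MSE.
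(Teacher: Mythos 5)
Your proposal is correct and essentially matches the paper's route. The three endpoint evaluations ($t=0$, $t=1^+$, $t=2$) are literally the paper's own substitutions of the infinite-sample overlaps into Corollaries~\ref{first:mse} and~\ref{sec:mse}. The only nontrivial step—converting the first-phase formula $\sigma^2+(1-\overline{\phi s})$ to $\sigma^2+(1-\overline{\phi^2})$ for $t\in(0,1)$—is established in the paper via the identity $\overline{\phi(\phi-s)}=0$, one of the consequences of Corollary~\ref{cor:phibar}, which rests on the Lemma proving $\overline{\phi-s}=0$. That Lemma is proven by the change of variables $z\to z\mp\kappa t$ followed by the observation that the integrand vanishes pointwise precisely when $\phi_0(z)=\tanh(\tanh^{-1}(\bar s)+\kappa t z)$. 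Your Bayes-posterior-mean derivation is the same calculation in disguise: the pointwise-vanishing condition in the paper's Lemma is exactly the statement that $\tanh(u)$ is the conditional mean $\E[s\mid u]$ for the effective scalar channel $u=\alpha s+\xi+b$ with $\alpha=v=(\kappa t)^2$, and the tower property $\E[\phi s]=\E[\phi\,\E[s\mid u]]=\E[\phi^2]$ replaces the "multiply by a test function" step in the paper's Corollary~\ref{cor:phibar}. What your framing buys is an explicit mechanism: $\alpha=v$ (i.e.\ $\omega=\kappa t$, $r=\omega^2$) makes the trained $\tanh$ unit exactly the Bayes-optimal denoiser of the mode label, which is the conceptual reason the two averages coincide. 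The paper's version is more symbol-pushing but arrives at the same place, so I would call this the same argument with a clearer interpretation rather than an independent one.
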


If we had not dilated time, in the limit of $d\to\infty,$ the $\text{mse}_\text{test}$ would have a jump from $\sigma^2+4p(1-p)$ at $t=0$ to $\sigma^2$ at $t=0^+.$ By dilating time, we make a transition between these two values with $\text{mse}_\text{test}=\sigma^2 + (1-\overline{\phi^2})$ for $0<t<1,$ where $1-\overline{\phi^2}=4p(1-p)$ for $t=0$ and $1-\overline{\phi^2}$ goes to $0$ exponentially fast as $\kappa$ grows when $t=1.$ Further, $\overline{\phi^2}$ is continuous in $t\in[0,1].$ Hence by dilating near the phase transition, we decreased the jump discontinuity of the \emph{mse}.

Remarkably, for generating samples from a general data distribution, this result suggests that the jumps in the \emph{mse} could correspond to phase transitions. Further, this phase transitions could be resolved by dilating near the jump in the \emph{mse}. We leave the study of this conjecture to future work.




\section{Generation}\label{sec:gen}
Having characterized the parameters $\hat{\theta}_t,$ we now show that $\hat{X}_2$ has the right parameters $p$ and $\sigma^2$ from the data distribution $\rho.$ Let $X_t$ be the solution to the ODE from \eqref{eq:ode:gen:1} using the exact denoiser from \eqref{eq:exact:b}. Assume $X_{t}$ and $\hat{X}_t$ have a shared initial condition $X_{t=0}=\hat{X}_{t=0}\sim\mathcal{N}(0,\text{Id}_d)$. Then $X_t-\hat{X}_t$ fulfills an ODE with initial condition $0$ whose velocity field is in the span of $u_t$ and $\mu.$

Result \ref{res:first} gives that in the first phase $q=m^2+nq_\eta^2.$ This can be explicitly stated as
$$
\lim_{d\to\infty} \frac{\|u\|^2}{d} = \lim_{d\to\infty}\left(\frac{\mu\cdot u}{d}\right)^2 + \left(\frac{\eta\cdot u}{d}\right)^2
$$
where $\eta =\sigma  \sum_{\mu=1}^n z^\mu.$ This means that $u_t$ is asymptotically contained in span$(\mu, \eta),$ in the sense that the projection to the complement of span$(\mu, \eta)$ has asymptotically vanishing norm, for $t\in[0,1].$ Similarly, from Result \ref{res:second}, we get $q= m^2 + nq_\xi^2 + n q_\eta^2,$ which means that $u_t$ is asymptotically contained in span$(\mu, \eta, \xi)$ for $t\in[1,2]$ where $\xi =\sum_\mu s^\mu x_0^\mu.$ This means that to show that $X_t$ is close to $\hat X_t,$ it suffices to bound the projections of $X_t-\hat{X}_t$ onto $\mu$, $\eta$, and $\xi.$ In fact, we have the following result (see Appendix \ref{gen:app})

\begin{res}
    \label{res:exact:vs:learned}
    Let $X_t$ be the solution of the probability flow ODE from \eqref{eq:ode:gen:1} using the exact denoiser from \eqref{eq:exact:b}. Let $\hat{X}_t$ be the solution using the learned denoiser. Assume $X_{t=0}=\hat{X}_{t=0}\sim\mathcal{N}(0,\text{Id}_d)$. Then for $w\in\text{span}(\mu,\eta,\xi),$ with $\|w\|_2=1$, we have
    \begin{align*}
        \lim_{d\to\infty}\frac{w \cdot (X_{2} - \hat X_{2})}{\sqrt{d}} = O\left(\frac{1}{n}\right).
    \end{align*}
    For $w\in\text{span}(\mu,\eta,\xi)^\perp,$ with $\|w\|_2=1,$ we have
    \begin{align*}
        \lim_{d\to\infty}\frac{w \cdot (X_{2} - \hat X_{2})}{\sqrt{d}} = 0.
    \end{align*}
\end{res}

\begin{cor}[Parameters $p$ and $\sigma^2$ are estimated correctly]
    \label{cor:5}
    Let $\hat{X}_t$ be the solution of the probability flow ODE from \eqref{eq:ode:gen:1} using the learned denoiser, starting from $\hat{X}_0\sim \mathcal{N}(0,\text{Id}_d).$ We have
    \begin{align*} 
        \lim_{\kappa\to\infty} \lim_{n\to\infty} \lim_{d\to\infty} \frac{\mu\cdot \hat{X}_2}{d}\sim p\delta_1+(1-p)\delta_{-1}.
    \end{align*}
    For $w \perp \mu,$ with $\|w\|_2=1,$ we have
    \begin{align*}
        \lim_{n\to\infty} \lim_{d\to\infty} \frac{w\cdot \hat{X}_2}{\sqrt{d}}\sim \mathcal{N}(0,\sigma^2).
    \end{align*}
\end{cor}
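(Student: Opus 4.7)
My plan is to combine Proposition \ref{prp:char:gm}, which characterizes the law of the exact-denoiser trajectory $X_2$, with Result \ref{res:exact:vs:learned}, which controls the difference $\hat X_2 - X_2$ along the relevant subspaces. The identity $\hat X_2 = X_2 + (\hat X_2 - X_2)$ then transfers the distribution of $X_2$ onto $\hat X_2$ up to a vanishing error.

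First, to handle the $\mu$-component, note that $\mu/\sqrt d$ is a unit vector lying in $\text{span}(\mu,\eta,\xi)$, so Result \ref{res:exact:vs:learned} gives
\[
\lim_{d\to\infty}\frac{\mu\cdot(\hat X_2-X_2)}{d}
=\lim_{d\to\infty}\frac{(\mu/\sqrt d)\cdot(\hat X_2-X_2)}{\sqrt d}=O(1/n),
\]
which vanishes once $n\to\infty$. Since $\frac{\mu\cdot X_2}{d}$ is precisely the quantity $M_2$ of Proposition \ref{prp:char:gm}, with limiting law $p_\kappa\delta_1+(1-p_\kappa)\delta_{-1}$ and $p_\kappa\to p$ as $\kappa\to\infty$, the iterated limit $\lim_\kappa\lim_n\lim_d$ yields the first claim.

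For a unit $w\perp\mu$, I would split $w=w_\parallel+w_\perp$ where $w_\parallel$ is the orthogonal projection of $w$ onto $\text{span}(\mu,\eta,\xi)$ (which, since $w\perp\mu$, actually lies in $\text{span}(\eta,\xi)$) and $w_\perp$ its complement. Applying Result \ref{res:exact:vs:learned} to each piece (the orthogonal part contributes $0$; the in-span part contributes $O(1/n)\,\|w_\parallel\|$) gives
\[
\lim_{d\to\infty}\frac{w\cdot(\hat X_2-X_2)}{\sqrt d}=O(1/n)\xrightarrow[n\to\infty]{} 0.
\]
Proposition \ref{prp:char:gm} supplies the limiting Gaussian law of the transverse component of $X_2$ with variance $\sigma_2^2\to\sigma^2$, and combining the two yields the second claim.

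I expect the main technical content to lie entirely in Result \ref{res:exact:vs:learned}; once that is in hand, the steps above are essentially linear algebra plus elementary limit interchanges. The one subtlety worth tracking is that Proposition \ref{prp:char:gm} fixes $\kappa$ when sending $d\to\infty$, so the $O(1/n)$ bound from Result \ref{res:exact:vs:learned} must remain uniform in $\kappa$ for the iterated limits $\lim_\kappa\lim_n\lim_d$ in the first claim to commute with the decomposition above; this is the case because $\kappa$ enters only through the time dilation $\tau_t$ in the first phase and does not degrade the concentration of the overlaps used to bound $\hat X_t - X_t$.
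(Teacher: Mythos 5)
Your argument is correct and follows the same route as the paper: combine Proposition~\ref{prp:char:gm} (the limiting law of $X_2$) with Result~\ref{res:exact:vs:learned} (that $\hat X_2 - X_2$ vanishes along the relevant directions) via the decomposition $\hat X_2 = X_2 + (\hat X_2 - X_2)$; your explicit split of a unit $w\perp\mu$ into components in and out of $\text{span}(\mu,\eta,\xi)$ simply makes precise a step the paper applies implicitly. The $\kappa$-uniformity worry at the end is not actually needed, since the iterated limit sends $d\to\infty$ and then $n\to\infty$ at \emph{fixed} $\kappa$ before letting $\kappa\to\infty$, so the $O(1/n)$ bound at each fixed $\kappa$ already suffices.
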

We conclude that the distribution generated using the learned denoiser captures both $p$ and $\sigma^2.$

\section{Experiments}
\subsection{Verification that parameter $p$ is captured}
To demonstrate the difference between the time dilated and non-dilated interpolants in practice we construct the following simple experiment. We run Gradient Descent with the Adam optimizer \cite{diederik2014adam} to learn the parameters $w_t,c_t,u_t,b_t$ in \eqref{eq:dae} both for $\alpha_t=1-t,\beta_t=t$ and the dilated version $\alpha_t=1-\tau_t,\beta_t=\tau_t.$ The results are shown in Figure \ref{fig:enter-label} and suggest time-dilation is required to estimate the probability of each mode.


The code for this experiment is available \href{https://github.com/submissioniclr12263/submission}{here}. 
\begin{figure}
    \centering
    \includegraphics[width=0.6\linewidth]{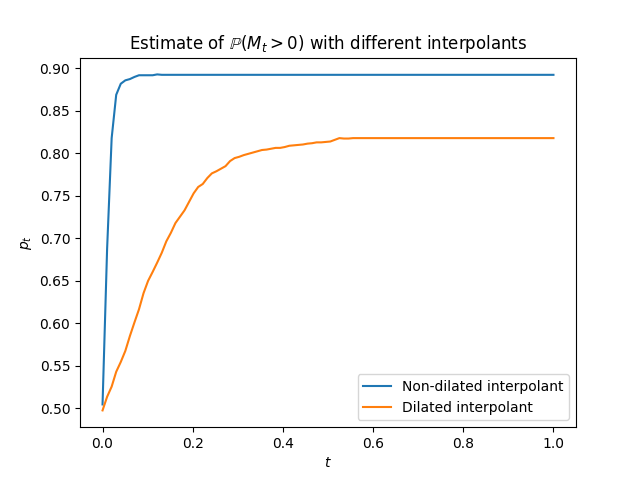}
    \caption{We learn the parameters from \eqref{eq:dae} for different choices of interpolant. In all experiments, we take $100$ discretization points, train for $5000$ epochs, with $n=128,$ $d=5000,$ and $p=.8.$ We then run the probability flow ODE with the learned parameters for $K=2000$ realizations and estimate $\mathbb{P}(M_t>0)=p$ with $M_t=\mu\cdot X_t/d.$ For the non-dilated interpolant in blue, we use $\alpha_t=1-t,\beta_t=t$. We predict the speciation to happen near $t=1/\sqrt{5000}\approx .014$ as confirmed by the experiment since most of the speciation occurs at the first two ODE steps. For the dilated interpolant in orange, we use $\alpha_t=1-\tau_t,\beta_t=\tau_t,\kappa=4.$ We see the dilated interpolant estimates $p=.8$ much better than the non-dilated one.}
    \label{fig:enter-label}
\end{figure}

\subsection{Training a given feature on real data: MNIST}
Recall that in the background we mentioned that the analysis of \cite{biroli2024dynamicalregimesdiffusionmodels} shows that taking $\alpha_t=1-t$ and $\beta_t=t$ without any time-dilation gives an speciation time $t_s=1/\sqrt{d}.$ This then means that probability of each mode (given by $p$) can not be captured as $d\to\infty.$ Our analysis then shows that if we dilate time by stretching the interval $[0,\kappa/\sqrt{d}]$ to $[0, 1]$ and the interval $[\kappa/\sqrt{d}, 1]$ to $[1, 2],$ then we get accurate estimation of $p.$ 

When training diffusion models in practice, we first sample a batch of times $t_1,\cdots,t_k$ uniformly. We then draw $x^\mu_0\sim \mathcal{N}(0,\text{Id}_d),$ $x^\mu_1$ from our data distribution, and form a noisy sample $x^\mu_{t^\mu} = (1-{t^\mu})x^\mu_0 + {t^\mu}x^\mu_1$ for $\mu=1,\cdots,k.$ We finally train on the loss

\begin{equation}
    \mathcal{\hat{R}}(\theta) = \sum^k_{\mu=1}||f_{\theta}(x_{t^\mu}^{\mu},{t^\mu}) - x_1^\mu||^2.
    \label{eq:emp:loss2}
\end{equation}

where we took time as a parameter of the network as it is usually done in practice, as opposed to having a separate network for each time $t.$ 

\textit{The insight of our analysis is that instead of taking the batch of times uniformly, we can sample more times near the phase transition associated to a given feature, and in this way improve accuracy on that feature.}

For a given feature, we can find the times where that feature is learned using the U-Turn method (\cite{sclocchi2024phasetransitiondiffusionmodels}, \cite{biroli2024dynamicalregimesdiffusionmodels}). Consider a dataset where each sample corresponds to exactly one of finitely many classes. Examples of this are samples of the GM which correspond to one of two modes, or samples of MNIST which correspond to one of ten digits. The U-Turn then consists of starting with a sample from the data, run a backward diffusion model from time $t=1$ to $t=t_0,$ which noises the sample, and then run the forward diffusion model from time $t=t_0$ to $t=1$ with noise independent from the backward run. 

We are then interested in the probability that the sample before the backward and forward passes belongs to the same class as the sample after them. For $t_0 \approx 1,$ this probability is close to $1.$ For $t_0 \approx 0,$ this probability is close to the underlying probability of the diffusion model generating a sample of the given class. By running this for different $t_0,$ we can find at what times it is decided to what class the samples belong to. Having found those times, our goal is to have a model that generates samples for each class according to the probabilities that they appear in the dataset. We can then improve the accuracy of the model on this by training on these times.

As a simple example, we train a U-Net (see Appendix \ref{subsec:exp} for details) to parameterize the Variance Preserving SDE from \cite{song2021scorebased} to generate either the $0$ or $1$ digits from MNIST. The dataset we train on consists of $20\%$ $1$ digits and $80\%$ $0$ digits. We then measure how well is this model in generating samples that represent this asymmetry. The model is trained on approximately 7400 samples for 9 epochs, by sampling times in $[0,1]$ uniformly as described in the beginning of this section. We then generate $18500$ new samples running this model using 1000 discretization steps. \footnote{This amount of discretization steps is much larger than what is needed for MNIST, and we do it this way to make sure that the error is not coming from the integration of the SDE but from the training alone.} Among the $18500$ generated samples, $88.2\%$ are digits $0.$ (For determining this, we used a discriminator with $99.2\%$ accuracy on MNIST, see Appendix \ref{subsec:exp} for details.) 

We then test our proposed method. First, we determine at what time the digit that the sample represents is decided. We do this with the U-Turn method described above. Note that to do this, we use the model that we already trained. The results are in Figure \ref{fig:uturn}. We find that the times important for deciding the digit are early in the generation for $t\in[0.2, 0.6]$ and mostly concentrated on $t\in[0.3,0.5].$ 

We now train from scratch a model on $7400$ samples for 9 epochs as before, except that we do not sample the times uniformly. We instead sample times with probability $1/2$ uniformly in the interval $[0.3, 0.5]$ and with probability $1/2$ uniformly outside that interval. We then generate $18500$ new samples with this new model using $1000$ discretization steps, and find that $81.0\%$ are $0$s. We similarly consider sampling times with probability $1/2$ uniformly in the interval $[0.2, 0.6]$ and with probability $1/2$ outside that interval, generate samples, and find that $81.1\%$ are $0$s. This validates our hypothesis in the simple case of MNIST.

Although our theoretical analysis is for the probability flow ODE on the two-mode GM data distribution, this example on MNIST shows that the ideas developed here can be useful to the SDE generative models used in practice for real data.

\begin{figure}
    \centering
    \includegraphics[width=0.6\linewidth]{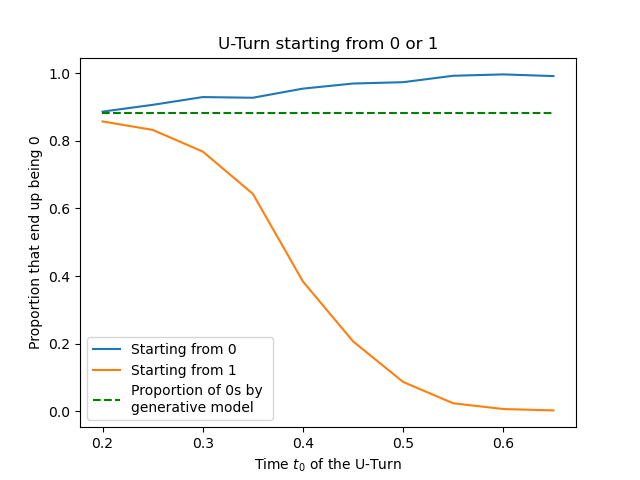}
    \caption{For $t_0\in [0.2, 0.65],$ we plot the proportion of $0$s that we get by doing the U-Turn at time $t_0$ starting from either $0$ or $1$ at time $t=1.$ On dashed green, we plot $y=.882$ which is the estimated proportion of $0$s that the diffusion model generates starting from noise.}
    \label{fig:uturn}
\end{figure}

\section{Acknowledgments}
The authors thank Eric Vanden-Eijnden and Hugo Cui for helpful discussions about this and previous work and Evan Dogariu for a discussion about Lemma \ref{lem:7}.


\bibliography{biblio}
\bibliographystyle{ICLR2025/iclr2025_conference}
\newpage
\appendix
\section{Proof of Proposition \ref{prp:char:gm}}
\label{app:prf}
To prove Proposition \ref{prp:char:gm}, we will use the following three Lemmas that follow directly from \cite{albergo2023stochasticinterpolantsunifyingframework} (Appendix A)
\begin{lem}
\label{lem:gen:gen} Let $a\sim \rho$ and $z\sim \mathcal{N}(0,\text{Id}_{d})$. The law of the interpolant $I_{t}=\alpha_{t}z+\beta_{t}a$ coincides with the law of the solution of the probability flow ODE
\begin{equation}
\begin{aligned}\dot{X}_{t} & =\frac{\alpha_{t}\dot{\alpha}_{t}+\sigma^2\beta_{t}\dot{\beta}_{t}}{\alpha_{t}^{2}+\sigma^2\beta_{t}^{2}}X_{t} 
+ \frac{\alpha_{t}(\alpha_{t}\dot{\beta}_{t}-\dot{\alpha}_{t}\beta_{t})}{\alpha_{t}^{2}+\sigma^2\beta_{t}^{2}} \mu\tanh\left(h+\frac{\beta_{t} \mu\cdot X_{t}}{\alpha_{t}^{2}+\sigma^2\beta_{t}^{2}}\right),\quad X_{0}\sim{\mathcal{N}}(0,\text{Id}_{d})\end{aligned}
\label{eq:b:gmm:2m}
\end{equation}
where $h$ is such that $e^h/(e^h+e^{-h})=p.$
\end{lem}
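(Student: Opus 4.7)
The plan is to compute the conditional expectation $b_t(x)=\mathbb{E}[\dot I_t\mid I_t=x]$ explicitly for the two-mode Gaussian mixture $\rho$, and then invoke the general stochastic-interpolant result of \cite{albergo2023stochasticinterpolantsunifyingframework}, already quoted in Section~\ref{sec:back}, which says that the probability-flow ODE $\dot X_t=b_t(X_t)$ started from $X_0\sim\mathcal{N}(0,\text{Id}_d)$ has the same one-time marginals as $I_t$. With that identification taken as given, the lemma reduces to verifying that the explicit closed form of $b_t$ for the GM equals the expression in \eqref{eq:b:gmm:2m}.

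First I would introduce the latent mode label $s\in\{+1,-1\}$ with $\mathbb{P}(s=+1)=p$, so that conditional on $s$ one can write $a=s\mu+\sigma\eta$ with $\eta\sim\mathcal{N}(0,\text{Id}_d)$ independent of $z$. Then $(z,a,I_t)$ is jointly Gaussian conditional on $s$, and $I_t\mid s\sim\mathcal{N}(s\beta_t\mu,\,v_t\,\text{Id}_d)$ with $v_t:=\alpha_t^2+\sigma^2\beta_t^2$. Standard Gaussian regression then gives $\mathbb{E}[z\mid I_t=x,s]=(\alpha_t/v_t)(x-s\beta_t\mu)$ and $\mathbb{E}[a\mid I_t=x,s]=s\mu+(\sigma^2\beta_t/v_t)(x-s\beta_t\mu)$. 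Combining these and collecting terms with the identity $\dot\beta_t-\beta_t(\alpha_t\dot\alpha_t+\sigma^2\beta_t\dot\beta_t)/v_t=\alpha_t(\alpha_t\dot\beta_t-\dot\alpha_t\beta_t)/v_t$ yields
\begin{equation*}
\mathbb{E}[\dot I_t\mid I_t=x,s]=\frac{\alpha_t\dot\alpha_t+\sigma^2\beta_t\dot\beta_t}{v_t}\,x+s\,\frac{\alpha_t(\alpha_t\dot\beta_t-\dot\alpha_t\beta_t)}{v_t}\,\mu.
\end{equation*}

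Next I would marginalize over $s$ using the tower property: the $x$-coefficient is deterministic and the $\mu$-coefficient is linear in $s$, so $b_t(x)$ is obtained by substituting $\mathbb{E}[s\mid I_t=x]$ for $s$. A direct Bayes computation, exploiting that both modes have the same conditional covariance so the Gaussian prefactors cancel in the posterior ratio, gives
\begin{equation*}
\mathbb{E}[s\mid I_t=x]=\frac{p\,e^{\beta_t(\mu\cdot x)/v_t}-(1-p)\,e^{-\beta_t(\mu\cdot x)/v_t}}{p\,e^{\beta_t(\mu\cdot x)/v_t}+(1-p)\,e^{-\beta_t(\mu\cdot x)/v_t}},
\end{equation*}
and this collapses to $\tanh\!\bigl(h+\beta_t(\mu\cdot x)/v_t\bigr)$ once $p$ and $1-p$ are written as $e^h/(e^h+e^{-h})$ and $e^{-h}/(e^h+e^{-h})$, respectively. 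Substituting back reproduces \eqref{eq:b:gmm:2m} verbatim.

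The only mildly delicate step is the arithmetic bookkeeping in the Gaussian-regression combination and the $\tanh$ rewriting of the posterior odds; no probabilistic or analytic subtlety enters beyond the already-assumed interpolant theorem and the joint Gaussianity of $(z,a,I_t)$ conditional on $s$.
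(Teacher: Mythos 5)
Your proof is correct and follows the same route the paper relies on: the paper merely cites Appendix~A of \cite{albergo2023stochasticinterpolantsunifyingframework} for this lemma, and your argument (condition on the mode label $s$, do Gaussian regression to get $\mathbb{E}[\dot I_t\mid I_t=x,s]$, then marginalize $s$ via a Bayes/posterior-odds computation collapsing to $\tanh$) is exactly the derivation carried out there. The algebraic identity $\dot\beta_t v_t-\beta_t(\alpha_t\dot\alpha_t+\sigma^2\beta_t\dot\beta_t)=\alpha_t(\alpha_t\dot\beta_t-\dot\alpha_t\beta_t)$ and the $\tanh$ rewriting both check out.
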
 
\begin{lem}
\label{lem:gen:gen2} Let $a\sim p\mathcal{N}(\kappa, 1)+(1-p)\mathcal{N}(-\kappa, 1)$ and $z\sim \mathcal{N}(0,1)$. The law of the interpolant $I_{t}=\alpha_{t}z+\beta_{t}a$ coincides with the law of the solution of the probability flow ODE. In 
\begin{equation}
\begin{aligned}\dot{X}_{t} & =\frac{\alpha_{t}\dot{\alpha}_{t}+\beta_{t}\dot{\beta}_{t}}{\alpha_{t}^{2}+\beta_{t}^{2}}X_{t} 
+ \frac{\kappa\alpha_{t}(\alpha_{t}\dot{\beta}_{t}-\dot{\alpha}_{t}\beta_{t})}{\alpha_{t}^{2}+\beta_{t}^{2}} \tanh\left(h+\frac{\kappa\beta_{t} X_{t}}{\alpha_{t}^{2}+\beta_{t}^{2}}\right),\quad X_{0}\sim{\mathcal{N}}(0,1)\end{aligned}
\label{eq:b:gmm:2m2}
\end{equation}
where $h$ is such that $e^h/(e^h+e^{-h})=p.$
\end{lem}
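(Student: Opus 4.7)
The plan is to specialize the general stochastic interpolant argument (Albergo et al., 2023, Appendix A) to the one-dimensional setting, exactly as in Lemma \ref{lem:gen:gen}. The key fact from that framework is that if $b_t(x) = \mathbb{E}[\dot I_t \mid I_t = x]$, then the probability flow ODE $\dot X_t = b_t(X_t)$ with $X_0 \sim \mathcal{N}(0,1)$ has marginals that coincide with those of $I_t = \alpha_t z + \beta_t a$. So the lemma reduces to computing this conditional expectation for the 1D two-component mixture target and checking that it matches the stated drift.

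First, I would introduce the mode indicator $s \in \{\pm 1\}$ with $\mathbb{P}(s=+1) = p$, so that conditional on $s$ the pair $(z,a)$ is jointly Gaussian with mean $(0, s\kappa)$ and identity covariance. Standard Gaussian linear conditioning then yields
\begin{align*}
\mathbb{E}[z \mid I_t = x, s] &= \frac{\alpha_t (x - s\beta_t \kappa)}{\alpha_t^2 + \beta_t^2}, \qquad
\mathbb{E}[a \mid I_t = x, s] = s\kappa + \frac{\beta_t (x - s\beta_t \kappa)}{\alpha_t^2 + \beta_t^2}.
\end{align*}
Plugging into $\dot I_t = \dot\alpha_t z + \dot\beta_t a$ and simplifying cancels the $s\kappa$ contribution from the drift term, leaving
\begin{align*}
\mathbb{E}[\dot I_t \mid I_t = x, s] = \frac{(\alpha_t \dot\alpha_t + \beta_t \dot\beta_t)\, x}{\alpha_t^2 + \beta_t^2} + s \cdot \frac{\kappa \alpha_t (\alpha_t \dot\beta_t - \dot\alpha_t \beta_t)}{\alpha_t^2 + \beta_t^2}.
\end{align*}

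Next, I would compute the posterior over the mode. Since $I_t \mid s \sim \mathcal{N}(s\beta_t\kappa,\ \alpha_t^2 + \beta_t^2)$, a direct Bayes calculation gives
\begin{align*}
\mathbb{P}(s = +1 \mid I_t = x) = \frac{p\, e^{u}}{p\, e^{u} + (1-p)\, e^{-u}}, \qquad u = \frac{\kappa \beta_t x}{\alpha_t^2 + \beta_t^2}.
\end{align*}
Using the definition $e^h/(e^h+e^{-h}) = p$, equivalently $p/(1-p) = e^{2h}$, this posterior rewrites as $\mathbb{P}(s=+1\mid I_t=x) - \mathbb{P}(s=-1\mid I_t=x) = \tanh(h+u)$. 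Applying the tower property $b_t(x) = \mathbb{E}[\,\mathbb{E}[\dot I_t \mid I_t = x, s]\mid I_t = x]$, the $s$-independent drift term passes through unchanged and the $s$-linear term averages to $\tanh(h+u)$ times its coefficient, producing exactly the drift claimed in the lemma.

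No step is genuinely hard; this is essentially a one-dimensional transcription of Lemma \ref{lem:gen:gen}. The only mildly fiddly piece is the algebraic rearrangement that turns the two-component softmax into $\tanh(h+u)$, which is precisely where the definition of $h$ is consumed. Everything else is routine Gaussian conditioning plus the general stochastic interpolant result invoked above, and the 1D calculation makes the structural parallel with Lemma \ref{lem:gen:gen} transparent: replacing the scalar $\kappa$ by $\mu \in \mathbb{R}^d$ and $x$ by $\mu\cdot x$ in the tanh argument recovers the $d$-dimensional formula.
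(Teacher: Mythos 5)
Your derivation is correct and follows the same route the paper implicitly relies on: the paper simply cites Albergo et al.\ (2023, Appendix~A) for all three lemmas, and your argument is exactly that computation made explicit in one dimension — Gaussian conditioning on the mode indicator $s$ to get $\mathbb{E}[\dot I_t \mid I_t, s]$, Bayes to turn the mode posterior into $\tanh(h+u)$ via $p/(1-p)=e^{2h}$, then the tower property and the general interpolant fact that $\dot X_t = \mathbb{E}[\dot I_t\mid I_t=X_t]$ transports $\mathcal{N}(0,1)$ along the marginals of $I_t$. No gaps; this is a faithful 1D transcription of the cited result.
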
 

\begin{lem}
\label{lem:gen:gen3} Let $a\sim \mathcal{N}(\pm\mu,\sigma^2\text{Id}_{d})$ and $z\sim \mathcal{N}(0,\text{Id}_{d})$. The law of the interpolant $I_{t}=\alpha_{t}z+\beta_{t}a$ coincides with the law of the solution of the probability flow ODE
\begin{equation}
\begin{aligned}\dot{X}_{t} & =\frac{\alpha_{t}\dot{\alpha}_{t}+\sigma^2\beta_{t}\dot{\beta}_{t}}{\alpha_{t}^{2}+\sigma^2\beta_{t}^{2}}X_{t} 
\pm \frac{\alpha_{t}(\alpha_{t}\dot{\beta}_{t}-\dot{\alpha}_{t}\beta_{t})}{\alpha_{t}^{2}+\sigma^2\beta_{t}^{2}} \mu,\quad X_{0}\sim{\mathcal{N}}(0,\text{Id}_{d}).\end{aligned}
\label{eq:b:gmm:2m3}
\end{equation}
\end{lem}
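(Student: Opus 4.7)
The plan is to verify the lemma by direct computation of laws on both sides, exploiting the fact that everything in sight is Gaussian. First, I would observe that the interpolant $I_t = \alpha_t z + \beta_t a$ is, for each fixed $t$, a Gaussian random vector in $\mathbb{R}^d$: writing $a = \pm \mu + \sigma \tilde z$ with $\tilde z \sim \mathcal{N}(0, \mathrm{Id}_d)$ independent of $z$, one has
\begin{equation*}
    I_t = \pm \beta_t \mu + \alpha_t z + \sigma \beta_t \tilde z,
\end{equation*}
so that $I_t \sim \mathcal{N}\!\big(\pm\beta_t \mu,\, v_t\, \mathrm{Id}_d\big)$ with $v_t := \alpha_t^2 + \sigma^2 \beta_t^2$. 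At $t=0$ we recover $I_0 \sim \mathcal{N}(0, \mathrm{Id}_d)$ since $v_0 = 1$, consistent with the initial condition of the ODE.

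Next, I would show that the linear ODE in the statement, with initial condition $X_0 \sim \mathcal{N}(0,\mathrm{Id}_d)$, has a Gaussian solution at each time $t$, and that its mean and covariance coincide with those of $I_t$. Writing the ODE as $\dot X_t = A_t X_t + c_t$ with $A_t = (\alpha_t\dot\alpha_t + \sigma^2 \beta_t\dot\beta_t)/v_t$ and $c_t = \pm \alpha_t(\alpha_t\dot\beta_t - \dot\alpha_t\beta_t)\mu / v_t$, the mean $m_t = \mathbb{E}[X_t]$ and covariance $C_t = \mathrm{Cov}(X_t) = \gamma_t \mathrm{Id}_d$ satisfy
\begin{equation*}
    \dot m_t = A_t m_t + c_t, \qquad \dot\gamma_t = 2 A_t \gamma_t.
\end{equation*}
Noting that $A_t = \tfrac{1}{2}\tfrac{d}{dt}\log v_t$, the covariance equation integrates to $\gamma_t = v_t / v_0 = v_t$, matching the covariance of $I_t$. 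For the mean, I would substitute the ansatz $m_t = \pm \beta_t \mu$ into the mean ODE and check that the cross terms $\alpha_t \dot\alpha_t \beta_t$ and $-\dot\alpha_t \alpha_t \beta_t$ cancel, leaving $\dot m_t = \pm \dot\beta_t \mu$, as required.

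Since both $I_t$ and $X_t$ are Gaussian with the same mean and covariance, their one-dimensional marginals coincide. To upgrade this to equality of processes (which is the statement of the lemma in the sense of matching laws at every $t$), I would invoke the standard stochastic interpolant identity $b_t(x) = \mathbb{E}[\dot I_t \mid I_t = x]$ from Albergo et al.\ and verify directly that the right-hand side equals $A_t x + c_t$; this follows from the jointly Gaussian structure of $(I_t, z, \tilde z)$ by computing $\mathbb{E}[z \mid I_t = x]$ and $\mathbb{E}[\tilde z \mid I_t = x]$ via the usual regression formula $\mathrm{Cov}(\cdot, I_t)/\mathrm{Var}(I_t)$. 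Alternatively, one can simply note that a linear Gaussian diffusion/flow is determined by its mean and covariance evolution, so matching those at all $t$ yields equality in law.

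The main obstacle, if any, is purely bookkeeping: keeping track of the signs in $\pm$ and the algebraic simplification $\dot\beta_t v_t - (\alpha_t\dot\alpha_t + \sigma^2\beta_t\dot\beta_t)\beta_t = \alpha_t(\alpha_t \dot\beta_t - \dot\alpha_t \beta_t)$ that makes the coefficient of $\mu$ collapse to the form in the statement. There is no genuine difficulty beyond this: the result is essentially a special case of Lemma \ref{lem:gen:gen} with the mixture collapsed to a single component (formally $p \in \{0,1\}$, so $\tanh(\cdot)$ becomes $\pm 1$), and could also be derived by specializing that lemma.
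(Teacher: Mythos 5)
Your proposal is correct, but it takes a genuinely different route from the paper's. The paper offers no independent proof of Lemma~\ref{lem:gen:gen3}: it treats all three of Lemmas~\ref{lem:gen:gen}--\ref{lem:gen:gen3} as immediate specializations of the explicit Gaussian-mixture denoiser formula from Appendix~A of \cite{albergo2023stochasticinterpolantsunifyingframework} (indeed, Lemma~\ref{lem:gen:gen3} is exactly Lemma~\ref{lem:gen:gen} with $p\in\{0,1\}$, so $\tanh(\cdot)\equiv\pm 1$, as you observe at the end). You instead give a self-contained elementary argument: for each $t$ both $I_t$ and $X_t$ are Gaussian, with mean and isotropic covariance satisfying first-order ODEs whose solutions match ($m_t=\pm\beta_t\mu$ and $\gamma_t=v_t=\alpha_t^2+\sigma^2\beta_t^2$), hence the time-marginal laws coincide. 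The algebra you flag does check out, in particular $A_t=\tfrac12\tfrac{d}{dt}\log v_t$ and $\dot\beta_t v_t-(\alpha_t\dot\alpha_t+\sigma^2\beta_t\dot\beta_t)\beta_t=\alpha_t(\alpha_t\dot\beta_t-\dot\alpha_t\beta_t)$, and there are no singularity issues since $v_t\ge\min(1,\sigma^2)>0$ on $[0,1]$. One minor overcaution: the lemma only asserts equality of the one-time marginal laws, not equality as processes, so your third paragraph (the appeal to $b_t(x)=\mathbb{E}[\dot I_t\mid I_t=x]$ and the Gaussian regression formula) is not needed --- the moment-matching argument already closes the proof. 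Your route is more elementary and self-contained; the paper's citation buys a uniform derivation of all three lemmas from one formula and tracks the dependence on $p$ that Lemmas~\ref{lem:gen:gen} and~\ref{lem:gen:gen2} actually need.
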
 

\begin{proof}[Proof of Proposition \ref{prp:char:gm}]
    \textbf{First phase.} We have $\tau_t = \frac{\kappa t}{\sqrt{d}}$ since $t\in [0,1].$ Plugging in $\alpha_t=1-\tau_t$ and $\beta_t=\tau_t$ into the velocity field from Lemma \ref{lem:gen:gen} yields 
    \begin{align}
        \dot{X}_{t} & = \frac{\kappa}{\sqrt{d}}\left(-X_t+\mu\tanh\left(h+\kappa t \frac{\mu\cdot X_{t}}{\sqrt{d}}\right)\right) + O\left(\frac{1}{d}\right).
        \label{eq:x:1st:phase}
    \end{align}
    We then have, with $\nu_t=\mu\cdot X_t/\sqrt{d},$
    \begin{align}
        \label{eq:vp:1st:mu}
        \dot{\nu}_{t} & = \kappa\tanh\left(h+\kappa t \nu_t\right) + O\left(\frac{1}{\sqrt{d}}\right).
    \end{align}
    Taking $d\to\infty$ yields the limiting ODE for $\nu_t$. From Lemma \ref{lem:gen:gen2}, we get that this the $1$-dimensional velocity field associated to the interpolant $I_t=\sqrt{1-t^2}z+ta$ that transports $z\sim{\mathcal{N}}(0,1)$ at $t=0$ to $a\sim p{\mathcal{N}}(\kappa,1) + (1-p){\mathcal{N}}(-\kappa, 1)$ at $t=1.$

    Let $X^\perp_t = X_t - \frac{\mu\cdot X_t}{{d}}\mu.$ We have from \eqref{eq:x:1st:phase}
    \begin{align}
        \label{eq:x_perp}
        \dot{X}^\perp_{t} & = -\frac{\kappa}{\sqrt{d}} X^\perp_t.
    \end{align}
    Since this is a linear ODE with initial condition Gaussian, we have 
    \begin{align}
        \dot{X}^\perp_{t} \sim \mathcal{N}\left(0, \sigma^2_t\text{Id}_{d-1}\right).
    \end{align}
    Further, \eqref{eq:x_perp} gives $\dot{X}^\perp_{t} = O({1/\sqrt{d}})$ meaning that for $t\in[0,1]$
    \begin{align}
        \lim_{d \to\infty} \sigma_t =1. 
    \end{align}
    \newline
    \textbf{Second phase.} For $t\in[1,2],$ we have $\tau_t = \left(1-\frac{\kappa}{\sqrt{d}}\right)(2t-1)+\frac{\kappa}{\sqrt{d}}.$ Again using Lemma \ref{lem:gen:gen}, we get 
    \begin{align}
        \dot X_t = \frac{-(2-t)+\sigma^2(t-1)}{(2-t)^2+\sigma^2 (t-1)^2}X_t + \frac{(2-t)\tanh\left(h + \frac{(t-1)\mu\cdot X_t + \kappa \frac{\mu\cdot X_t}{\sqrt{d}}}{(2-t)^2+\sigma^2 (t-1)^2}\right)}{(2-t)^2+\sigma^2 (t-1)^2}\mu + O\left(\frac{1}{\sqrt{d}}\right).
        \label{eq:ode:x}
    \end{align}
    Writing $\nu_t=\frac{\mu\cdot X_t}{\sqrt{d}},$ this implies
    \begin{align}
        \dot \nu_t = \frac{-(2-t)+\sigma^2(t-1)}{(2-t)^2+\sigma^2 (t-1)^2}\nu_t + \frac{(2-t)\sqrt{d}\tanh\left(h + \frac{(t-1)\sqrt{d}\nu_t + \kappa \nu_t}{(2-t)^2+\sigma^2 (t-1)^2}\right)}{(2-t)^2+\sigma^2 (t-1)^2} + O_d\left(1\right).
        \label{eq:ode:muu}
    \end{align}
    Let us calculate the initial condition for $\nu_t$ at $t=1.$ Write $a=sm + z$ where $p=\mathbb{P}(s=1) = 1-\mathbb{P}(s=-1)$ and $z\sim{\mathcal{N}}(0,\text{Id}_{d}).$ Then $$\frac{\mu\cdot I^\text{}_{t=1}}{\sqrt{d}} \stackrel{(d)}{=} Z + \kappa s + O\left(\frac{1}{\sqrt{d}}\right)$$ where $Z\sim{\mathcal{N}}(0,1).$ This means that for $\kappa$ large enough, then $|h|<\left|\frac{(t-1)\sqrt{d}\nu_t + \kappa \nu_t}{(2-t)^2+\sigma^2 (t-1)^2}\right|$ with high probability. This implies $\nu_t$ will not change sign during its trajectory, since whenever $\nu_t=o(\sqrt{d})$, the $\tanh$ term will dominate in \eqref{eq:ode:muu}. Hence, the following approximation is valid
    \begin{align}
        \tanh\left(h + \frac{(2t-1)\sqrt{d}\nu_t + \kappa \nu_t}{1+(\sigma^2-1)(2t-1)^2}\right) = \tanh\left(\sqrt{d}\nu_t\right) = \text{sgn}(\nu_t)
    \end{align}
    We then use this approximation in the ODEs for $X_t$ to get 
    \begin{align}
        \dot X_t = \frac{-(2-t)+\sigma^2(t-1)}{(2-t)^2+\sigma^2 (t-1)^2}X_t + \frac{(2-t)\text{sgn}(\nu_{t})}{(2-t)^2+\sigma^2 (t-1)^2}\mu + O\left(\frac{1}{\sqrt{d}}\right).
        \label{eq:ode:xx}
    \end{align}
    Let $M_t=\mu\cdot X_t/d.$ We get the induced equation
    \begin{align}
        \dot M_t = \frac{-(2-t)+\sigma^2(t-1)}{(2-t)^2+\sigma^2 (t-1)^2}M_t + \frac{(2-t)\text{sgn}(M_{t})}{(2-t)^2+\sigma^2 (t-1)^2} + O\left(\frac{1}{\sqrt{d}}\right).
        \label{eq:ode:xxx}
    \end{align} 
    From the analysis of the first phase we had 
    \begin{align}
        \nu_{1}\sim p\mathcal{N}(\kappa, 1) + (1-p)\mathcal{N}(-\kappa, 1).
    \end{align}
    We argued above that the sign of $\nu_t$ will be preserved for $t\in[1, 2]$ with probability going to $1$ as $\kappa$ tends to $\infty.$ This means that 
    \begin{align}
        M_2 = p^\kappa\delta_1 + (1-p^\kappa)\delta_{-1}
    \end{align}
    where $p^\kappa$ is such that  $\lim_{\kappa\to \infty}p^\kappa=p.$ 
    
    As in the first phase, we let $X^\perp_t = X_t - \frac{\mu\cdot X_t}{{d}}\mu.$ We have from \eqref{eq:ode:xx} that
    \begin{align}
        \label{eq:x_perp:2nd}
        \dot{X}^\perp_{t} & = \frac{-(2-t)+\sigma^2(t-1)}{(2-t)^2+\sigma^2 (t-1)^2}X^\perp_t + O\left(\frac{1}{\sqrt{d}}\right).
    \end{align}
    Since this is a linear ODE with initial condition Gaussian, we have 
    \begin{align}
        \dot{X}^\perp_{t} \sim \mathcal{N}\left(0, \sigma^2_t\text{Id}_{d-1}\right).
    \end{align}
    Under the change of variables $t(s)=s+1$ we get that the ODE becomes
    \begin{align}
        \label{eq:final:x_t3}
        \dot X^{\perp}_s= \frac{-(1-s)+\sigma^2s}
        {(1-s)^2+\sigma^2s^2}X^{\perp}_{s}.
    \end{align}
    By taking one coordinate $i\in\{1,\cdots, d-1\}$ of $X_s^\perp$ we get from Lemma \ref{lem:gen:gen3} that this is the velocity field associated with the interpolant $I_s=\sqrt{1-s^2}z+sa$ where $z\sim \mathcal{N}(0,1)$ is transported to $a\sim \mathcal{N}(0, \sigma^2)$ as desired. 
\end{proof}

\section{Derivations of learning results}
\subsection{Derivation of First Phase}
\label{sec:first_phase}
Let $t\in\left[0,1\right]$ so that 
\[
x_t^\mu = \left(1 - \frac{\kappa t}{\sqrt{d}}\right)x_0^\mu + \frac{\kappa t}{\sqrt{d}} x_1^\mu
\]
Consider a denoiser parametrized as
$$
f(x)=cx + u\tanh\left(b + \frac{w\cdot x}{\sqrt{d}}\right)
$$
We introduce the following overlaps which we assume to be of order 1 in $d$
\[
p_\eta^\mu = s^\mu \frac{z^\mu\cdot w}{d}, \quad
\omega = \frac{\mu\cdot w}{d}, \quad
r = \frac{\|w\|^2}{d}\quad 
q_\xi^\mu = s^\mu \frac{x_0^\mu\cdot u}{d},\quad 
q_\eta^\mu = s^\mu \frac{z^\mu\cdot u}{d}, \quad 
m = \frac{\mu\cdot u}{d}, \quad
q = \frac{\|u\|^2}{d}
\]
We note that 
\begin{align*}
    \phi^\mu = f(x_t^\mu)= \tanh\left(b + \kappa t  \sigma s^\mu p_\eta^\mu  + \kappa t \omega s^\mu + \sqrt{r} Z^\mu + o_d(1) \right)
\end{align*}
where $Z^\mu\sim\mathcal{N}(0,1).$ We now compute the loss
\begin{align*}
    \frac{1}{d}\sum_\mu\left\|x_1^\mu - f(x_t^\mu)\right\|^2 
    &=  \frac{1}{d}\sum_\mu\left\|x_1^\mu - c((1 - \kappa t/\sqrt{d})x_0^\mu + (\kappa t/\sqrt{d}) x_1^\mu) - u \phi^\mu\right\|^2 \\
    &=  \frac{1}{d}\sum_\mu\left\|x_1^\mu - c x_0^\mu - u \phi^\mu\right\|^2 + o_d(1)\\
    &= \sum_\mu 1+\sigma^2 +c^2+\frac{\|u\|^2}{d}(\phi^\mu)^2-2\left((\mu s^\mu + \sigma z^\mu)(1-c\kappa t/\sqrt{d}) - c x_0^\mu\right)\cdot\frac{u}{d}\phi^\mu + o_d(1)\\
    &= \sum_\mu 1+\sigma^2 + c^2 + q(\phi^\mu)^2 - 2(m + \sigma q_\eta^\mu - c q_\xi^\mu)s^\mu\phi^\mu + o_d(1)
\end{align*}
We follow the same style of calculation as \cite{cui2024analysislearningflowbasedgenerative} to compute the partion function. First we write the partition function 
\begin{align*}
    \mathcal{Z} 
    &= \int d\theta\, e^{-\frac{\gamma}{2}\hat R_t(\theta)} \\
    &=\int dc du dw db\, e^{-\frac{\gamma d}{2} \left(\sum_\mu 1+\sigma^2 + c^2 + \frac{\|u\|^2}{d}(\phi^\mu)^2 - 2( \frac{u\cdot \mu}{d}+ \sigma\frac{u\cdot z^\mu}{d} - c \frac{u\cdot x_0^\mu}{d})s^\mu\phi^\mu\right)-\frac{\gamma\lambda}{2} \|u\|^2 - \frac{\gamma\ell}{2}\|w\|^2 }
\end{align*}
Next we introduce overlaps into the integral and their conjugates by Dirac-Fourier, which we will denote as the vectors $\zeta$ and $\hat \zeta$ to simplify notation, and rearrange to integrate $u,w$
\begin{align*}
    \mathcal{Z} =& 
    \int dc db \, d\zeta d\hat \zeta\, 
    e^{d\left(\frac{1}{2}\hat q q+\hat m m +\frac{1}{2}\hat r r+\hat \omega \omega + \sum_{\mu=1}^n (q_\xi^\mu \hat q_\xi^\mu + q_\eta^\mu \hat q_\eta^\mu +p_\eta^\mu \hat p_\eta^\mu)) 
    - \frac{\gamma}{2} \sum_\mu (1+\sigma^2 + c^2 + q(\phi^\mu)^2 - 2( m + \sigma q_\eta^\mu - c q_\xi^\mu)s^\mu\phi^\mu )\right)}\\
    &\int du  e^{-\frac{\hat q + \gamma \lambda }{2} \|u\|^2  - u\cdot \left(\hat m \mu + \sum_{\mu=1}^n \hat q_\xi x_0^\mu + \hat q_\eta z^\mu \right) } \int dw e^{- \frac{\hat r + \gamma\ell}{2}\|w\|^2 - w\cdot\left(\hat \omega \mu + \sum_{\mu=1}^n\hat p_\eta z^\mu \right)}
\end{align*}
Next we evaluate the $u,w$ integrals to get
\begin{align*}
    &e^{d\left(\frac{1}{2}\log(\hat q + \gamma \lambda)+\frac{1}{2}\log(\hat r + \gamma \ell)+\frac{1}{2(\hat q + \gamma \lambda)} \frac{1}{d}\|\hat m \mu + \sum_{\mu=1}^n \hat q_\xi^\mu x_0^\mu + \hat q_\eta^\mu z^\mu \|^2 + \frac{1}{2(\hat r + \gamma\ell)}\frac{1}{d}\|\hat \omega \mu + \sum_{\mu=1}^n\hat p_\eta^\mu z^\mu\|^2  \right)}\\
    &= e^{d\left(\frac{1}{2}\log(\hat q + \gamma \lambda)+\frac{1}{2}\log(\hat r + \gamma \ell)+\frac{1}{2(\hat q + \gamma \lambda)}(\hat m^2 + \sum_{\mu=1}^n (\hat q_\xi^\mu)^2 + (\hat q_\eta^\mu)^2) + \frac{1}{2(\hat r + \gamma\ell)}(\hat \omega^2  + \sum_{\mu=1}^n(\hat p_\eta^\mu)^2) + O(1/\sqrt{d})  \right)}
\end{align*}
We now assume a sample-symmetry ansatz on the overlaps which means that $q_\xi^\mu=q_\xi$ for every $\mu$ are all equal, and the same for $ \hat{q}_\xi^\mu,q_\eta^\mu, \hat{q}_\eta^\mu,p_\eta^\mu, \hat{p}_\eta^\mu.$ We then take $d\to\infty$, rescale all conjugates with $\gamma$, change all conjugates signs except for $\hat{q}$ and $\hat{r}$ for cleaner equations, and take $\gamma \to \infty$. This gives us the following effective field (log partition function)
\begin{align*}
\log \mathcal Z(\mathcal D) = \text{extr}\bigg\{&-\frac{n}{2}\left(
-2(\sigma  q_\eta+ m - c q_\xi)\overline {\phi s}+c^2+q\overline{\phi^2}\right)\\
&+\frac{q\hat q}{2} +\frac{r\hat r}{2} - m\hat m -\omega \hat \omega- n (q_\xi\hat q_\xi +q_\eta\hat q_\eta +p_\eta \hat p_\eta)\\
&+\frac{\hat m^2+n(\hat q_\xi^2+\hat q_\eta^2)}{2(\lambda + \hat q)}+\frac{\hat\omega^2 + n \hat p_\eta^2}{2(\ell + \hat r)}
\bigg\}
\end{align*}
Taking gradients we get the following saddle-point equations

{
\setlength{\jot}{10pt} 
\renewcommand{\arraystretch}{1.5} 
\[
\left\{
\begin{array}{ll}
    (\sigma q_\eta + m-cq_\xi)\overline{\phi' s} - q\overline{\phi\phi'}=0\\
    c = -\overline{\phi s} q_\xi \\
    p_\eta =\frac{\hat p_\eta}{\ell + \hat r}\\
    \omega = \frac{\hat \omega}{\ell + \hat r}\\
    r = \frac{\hat \omega^2 + n \hat p_\eta^2}{(\ell + \hat r)^2}= \omega^2 + np_\eta^2\\
    q_\xi=\frac{\hat q_\xi}{\lambda + \hat q}\\
    q_\eta = \frac{\hat q_\eta}{\lambda + \hat q}\\
    m = \frac{\hat m}{\lambda + \hat q}\\
    q = \frac{\hat m^2 + n(\hat q_\xi^2 + \hat q_\eta^2)}{(\lambda + \hat q)^2}= m^2 + nq_\xi^2 + n q_\eta^2 
\end{array}
\right.
\quad
\left\{
\begin{array}{ll}
    \hat p_\eta = (\kappa t)((\sigma q_\eta + m - c q_\xi)\overline{\phi' s} - q\overline{\phi' \phi})=0\\
    \hat{\omega} = (n\kappa t)((\sigma q_\eta + m - c q_\xi)\overline{\phi' } - q\overline{\phi' \phi s})\\
    \hat r = -n((\sigma q_\eta + m - c q_\xi)\overline{Z \phi' s} - q\overline{Z\phi\phi'})/\sqrt{r}\\
    \hat q_\xi = -c\overline{\phi s} \\
    \hat q_\eta = \sigma \overline{\phi s}\\
    \hat{m} = n\overline{\phi s}\\
    \hat q = n\overline {\phi^2}\\
\end{array}
\right.
\]
}

Combining the equations for $c$ and $q_\xi$ we get that $c = c \overline{\phi s}^2/(\lambda + n\overline{\phi^2})$. We now argue that $c=0$ almost surely, since otherwise $\overline{\phi}=0$ on a non-zero measure, implying $b=\omega=0$, which then results in all overlaps being zero, giving a suboptimal log partition function. This can be seen more explicitly by noting that the log partition function is zero for $c\neq 0$, but for $c=0$ it is instead 

\begin{align*}
    \log \mathcal Z(\mathcal D) = \text{extr}_{\omega, b}\left\{n\frac{ \overline{\phi s}^2(\sigma^2 + n)}{2\left(\lambda + n \overline{\phi^2}\right)} - \frac{1}{2}\ell \omega^2 \right\}
\end{align*}

which has positive values for example at $\omega=0,b\neq 0$. The above formulation is also useful for solving for the overlaps numerically.

\subsubsection{Argument for Corollary \ref{cor:sta}}
\label{sec:cor1}
We now focus on $n\to \infty$ and on verifying that $\omega=\kappa t, b = \tanh^{-1}(\bar s)$ is a solution. We will need the following preliminary claims.

\begin{lem} Let $\phi^\mu=\tanh(b + \kappa t \omega s^\mu + \omega  Z^\mu)$. If $\omega = \kappa t$ and $b=\tanh^{-1}(\overline{s})$ then $\overline{\phi - s}=0$
\end{lem}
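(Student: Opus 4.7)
The statement to prove is $\overline{\phi - s}=0$, which given the definition of $\overline{\cdot}$ is equivalent to
\[
\overline p\,\mathbb E_Z\!\left[\tanh\!\big(b+(\kappa t)^2+\kappa t Z\big)\right]+(1-\overline p)\,\mathbb E_Z\!\left[\tanh\!\big(b-(\kappa t)^2+\kappa t Z\big)\right]=2\overline p-1,
\]
where $Z\sim\mathcal N(0,1)$ and we have already substituted $\omega=\kappa t$. My approach is to recognize the left-hand side as the expectation of a Bayes-optimal posterior mean for a two-mean Gaussian location model, and then apply the tower property. This is the same identity underlying the exact denoiser formula \eqref{eq:exact:b}, specialized to the one-dimensional signal $\kappa t s$.

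The first step is the Bayes identity. Let $S\in\{\pm 1\}$ be drawn with $\mathbb P(S=1)=\overline p$, and conditionally on $S$ let $Y\sim\mathcal N(\kappa t\,S,1)$. A direct computation with Bayes' rule gives
\[
\mathbb P(S=1\mid Y=y)=\frac{\overline p\,e^{-(y-\kappa t)^2/2}}{\overline p\,e^{-(y-\kappa t)^2/2}+(1-\overline p)\,e^{-(y+\kappa t)^2/2}}=\sigma\!\big(2h+2\kappa t y\big),
\]
where $h=\tfrac12\log\frac{\overline p}{1-\overline p}$, and hence $\mathbb E[S\mid Y=y]=\tanh(h+\kappa t y)$. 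Note that $h=\tanh^{-1}(2\overline p-1)=\tanh^{-1}(\overline s)$, which is exactly the hypothesized value of $b$.

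The second step is to match the left-hand side above to $\mathbb E[\tanh(h+\kappa t Y)]$ via the usual conditioning on $S$. Conditional on $S=1$ we have $Y=\kappa t+Z$ so $\kappa t Y=(\kappa t)^2+\kappa t Z$; conditional on $S=-1$, $\kappa t Y=-(\kappa t)^2+\kappa t Z$. Therefore, with $\omega=\kappa t$ and $b=h$,
\[
\overline{\phi}=\mathbb E\!\left[\tanh(h+\kappa t Y)\right]=\mathbb E\!\left[\mathbb E[S\mid Y]\right]=\mathbb E[S]=2\overline p-1=\overline s,
\]
by the tower property. Subtracting $\overline s$ from both sides yields $\overline{\phi-s}=0$, which is the claim.

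There is no real obstacle: the only thing to notice is that the specific choice $\omega=\kappa t$ makes the exponent affine in $\kappa t Y$ for the right posterior model, and the specific choice $b=\tanh^{-1}(\overline s)$ encodes the prior odds $\overline p/(1-\overline p)$. Both are precisely what the exact denoiser identity requires, so the Bayes posterior-mean interpretation applies verbatim.
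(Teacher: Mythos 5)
Your proof is correct, and it takes a conceptually different route from the paper's. The paper argues by direct computation: it writes $\overline{\phi-s}=\overline{p}\,\overline{(\phi_+ - 1)}+(1-\overline{p})\,\overline{(\phi_- + 1)}$, performs the Gaussian shift $z\mapsto z\mp\kappa t$ inside the integral, and observes that the resulting integrand vanishes \emph{pointwise} precisely when $\phi_0=\tanh(\tanh^{-1}(\overline s)+\kappa t z)$, i.e.\ for the stated $\omega$ and $b$. You instead recognize that after the same change of variables, $\tanh(h+\kappa t Y)$ is the posterior mean $\mathbb E[S\mid Y]$ in a two-point Gaussian location model with prior $\mathbb P(S=1)=\overline p$, and conclude $\overline\phi=\mathbb E[\mathbb E[S\mid Y]]=\mathbb E[S]=\overline s$ by the tower property. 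The underlying cancellation is the same (the Gaussian shift is exactly passing from $Z$ to $Y$), but your framing replaces an explicit verification that a particular combination of integrands vanishes with the tower property, which is arguably more transparent and makes explicit that the hypotheses $\omega=\kappa t$ and $b=\tanh^{-1}(\overline s)$ are precisely the conditions under which $\phi$ becomes a Bayes-optimal posterior mean. One small caveat: the paper's pointwise-vanishing argument yields the slightly stronger statement that the integrand itself cancels, which directly sets up Corollary \ref{cor:phibar} (vanishing of $\overline{g(Z+s\kappa t)(\phi-s)}$ for arbitrary $g$); your tower-property argument as stated only gives $\overline{\phi-s}=0$, though it extends easily by replacing $\mathbb E[S]$ with $\mathbb E[g(Y)S]$ and applying the same conditioning.
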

\begin{proof}
    Let $\phi_\pm^\mu = \phi^\mu|_{s^\mu=\pm 1}$ and $\phi_0 = \phi^\mu|_{s^\mu=0}$. Then 
    \begin{align*}
        \overline{\phi - s} 
        &= \overline{p}\overline{(\phi_+ - 1)} + (1-\overline{p})\overline{(\phi_- + 1)}\\
        &= \int dz\, e^{-\frac{z^2}{2}}\left \{ \overline{p} (\phi_+ - 1)+(1-\overline{p}) (\phi_- + 1)\right\} \\
        &= \int dz\, e^{-\frac{z^2+(\kappa t)^2}{2}}\left \{ e^{\kappa t z} \overline{p} (\phi_0 - 1)+e^{-\kappa t z}(1-\overline{p}) (\phi_0 + 1)\right\} \quad z\to z\mp \kappa t \\
    \end{align*}
    Finally note that the integrad is zero for all $z$ if
    \begin{align*}
        \phi_0 = \frac{e^{\kappa t z}\overline{p} - e^{-\kappa t z} (1 - \overline{p})}{e^{\kappa t z}\overline{p}+(1- \overline{p})e^{-\kappa tz}}=\tanh\left(\tanh^{-1}(\overline{s}) + \kappa t z \right)
    \end{align*}
    which occurs for $\omega = \kappa t$ and $b=\tanh^{-1}(\overline{s})$. 
\end{proof}
\begin{cor}\label{cor:phibar}
    Let $\phi^\mu=\tanh(b + \kappa t \omega s^\mu + \omega Z^\mu)$. If $\omega = \kappa t$ and $b=\tanh^{-1}(\overline{s})$ then for any function $g$ where $\overline{g(Z\pm\kappa t)(\phi \mp 1)}$ exist we have
    \[
    \overline{g(Z+s\kappa t)(\phi - s)}=0
    \]
    In particular,
    \[
    \overline{\phi-s}=\overline{\phi(\phi - s)}=\overline{\phi'(\phi-s)}=\overline{(Z +s \kappa t )\phi'(\phi - s)} = 0
    \]
\end{cor}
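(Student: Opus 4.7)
The strategy is to generalize the pointwise-cancellation argument of the preceding lemma by carrying a test function $g$ through the same Gaussian shift. First, I split the bar over the value of $s^\mu\in\{\pm 1\}$, writing
\[
\overline{g(Z+s\kappa t)(\phi-s)} = \overline{p}\,\mathbb{E}_Z[g(Z+\kappa t)(\phi_+(Z)-1)] + (1-\overline{p})\,\mathbb{E}_Z[g(Z-\kappa t)(\phi_-(Z)+1)],
\]
where $\phi_\pm(Z):=\tanh(b \pm \omega\kappa t + \omega Z)$. In each expectation I perform the Gaussian change of variables $z\mapsto z\mp\kappa t$, which reduces the argument of $g$ to $z$ and produces the prefactor $e^{-(\kappa t)^2/2}e^{\pm\kappa t z}$. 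The key observation is that when $\omega=\kappa t$ the two shifted $\phi_\pm$ collapse to the \emph{same} function $\phi_0(z):=\tanh(b+\kappa t z)$, so the sum becomes
\[
e^{-(\kappa t)^2/2}\int\frac{dz}{\sqrt{2\pi}}\,e^{-z^2/2}\,g(z)\Bigl\{\overline{p}\,e^{\kappa t z}(\phi_0(z)-1)+(1-\overline{p})\,e^{-\kappa t z}(\phi_0(z)+1)\Bigr\}.
\]

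Next I verify that the braced expression vanishes pointwise in $z$. Setting it to zero and solving for $\phi_0$ gives $\phi_0(z)=\bigl(\overline{p}e^{\kappa t z}-(1-\overline{p})e^{-\kappa t z}\bigr)/\bigl(\overline{p}e^{\kappa t z}+(1-\overline{p})e^{-\kappa t z}\bigr)$, and a short algebraic manipulation shows this equals $\tanh(b+\kappa t z)$ exactly when $e^{2b}=\overline{p}/(1-\overline{p})$, equivalently $\tanh(b)=2\overline{p}-1=\overline{s}$, which is the hypothesis $b=\tanh^{-1}(\overline{s})$. Hence the integrand is identically zero and the bar vanishes for every admissible $g$.

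The four listed consequences follow by choosing $g$ appropriately. Taking $g\equiv 1$ gives $\overline{\phi-s}=0$. Under $\omega=\kappa t$ we may rewrite $\phi=\tanh(b+\kappa t(Z+\kappa t s))$, so $\phi$, $\phi'=\kappa t(1-\phi^2)$, and $(Z+s\kappa t)\phi'$ are all functions of the single variable $Z+s\kappa t$; the choices $g(y)=\tanh(b+\kappa ty)$, $g(y)=\kappa t(1-\tanh^2(b+\kappa ty))$, and $g(y)=y\,\kappa t(1-\tanh^2(b+\kappa ty))$ yield the remaining three identities.

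The main obstacle is conceptual rather than calculational: one must recognize that the hypothesis $\omega=\kappa t$ is precisely what aligns the $s$-dependent shift inside $\phi$ with the shift $s\kappa t$ appearing in the argument of $g$, so that after translating the Gaussian variable the sign $s$ disappears from everything except the prefactors $\overline{p}e^{\pm\kappa t z}$ and $\phi_0(z)\mp 1$. Once this alignment is observed, the vanishing is purely algebraic and reduces to the same identity that proved the preceding lemma.
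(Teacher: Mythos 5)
Your proof is correct and follows exactly the Gaussian-shift argument the paper uses for the preceding lemma, with the test function $g$ carried along through the change of variables $z\mapsto z\mp\kappa t$ and the pointwise cancellation verified via $\phi_0=\tanh(\tanh^{-1}(\overline s)+\kappa t z)$; the choices of $g$ for the four special cases are also right. One small imprecision: the shifted $\phi_\pm$ collapse to a single function $\tanh(b+\omega z)$ for any $\omega$ (since $\kappa t\omega s-\omega\cdot s\kappa t=0$); what the hypothesis $\omega=\kappa t$ actually buys you is that this collapsed function has the form $\tanh(b+\kappa t z)$ required for the pointwise identity to hold, and, for the four listed consequences, that $\phi$ and $\phi'$ become functions of $Z+s\kappa t$ so they qualify as admissible $g$'s.
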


Solving for $q_\eta,m,q_\xi,q$ and plugging into the equation for $b$ we get
\begin{align*}
    (\lambda + n\overline{\phi^2})\,\overline{\phi's}=n\overline{\phi s}\, \overline{\phi \phi'}
\end{align*}
Taking $n\to\infty$, to leading order in $n$ the equality becomes $(\overline{\phi^2})(\overline{\phi' s})-(\overline{\phi'\phi})(\overline{\phi s})=0$ which holds by \Corref{cor:phibar}.

Using the independence of $s^\mu, Z^\mu$ and taking the limit of infinitely many $Z^\mu$, we can use Stein's lemma to rewrite the $\hat r$ equation as 
\begin{align*}
      \hat r &=-n((\sigma  q_\eta + m-c q_\xi)\overline{\phi'' s} - q\overline{(\phi\phi')'}) 
\end{align*}
Plugging in $q_\eta,m,q_\xi,q$ in gives
\begin{align*}
    \hat r (\lambda + n\overline{\phi^2})^2 
    &= - n((\lambda + n\overline{\phi^2})(\sigma^2 + n) \overline{\phi'' s} \, \overline{\phi s}  - n(\sigma^2 +  n)\overline{\phi s}^2\,\overline{(\phi\phi')'})
\end{align*}
Plugging $q_\eta,m,q_\xi,q$ into equations for $\omega$ and $\hat \omega$ gives
\begin{align*}
    \omega(\ell + \hat r)(\lambda + n\overline{\phi^2})^2 
    &= (n\kappa t)((\lambda + n\overline{\phi^2})(\sigma^2  + n)(\overline{\phi' } \, \overline{\phi s}) - n(\sigma^2 + n) \overline{\phi s}^2 \overline{\phi'\phi s} )
\end{align*}
For large $n$ we get
\begin{align*}
    -\omega ((\overline{\phi^2}) (\overline{\phi'' s})(\overline{\phi s})  - (\overline{\phi s}^2)(\overline{(\phi\phi')'}) &= (\kappa t)((\overline{\phi^2})(\overline{\phi' }) (\overline{\phi s})) - (\overline{\phi s}^2 )(\overline{\phi'\phi s}))\\
    \omega  &= (\kappa t)\frac{((\overline{\phi' })  - (\overline{\phi'\phi s}))}{-( (\overline{\phi'' s})  - (\overline{(\phi\phi')'}))}\\
    &=(\kappa t)\frac{\overline{(\phi - \tfrac{1}{2}\phi^2 s)' }  }{-\overline{(\phi - \tfrac{1}{2}\phi^2 s)''s}}
\end{align*}
Finally note that
\begin{align*}
    \overline{(\phi - \tfrac{1}{2}\phi^2 s)'} + \overline{(\phi - \tfrac{1}{2}\phi^2 s)''s} = -\frac{1}{2\kappa t}\overline{\left[(\phi - s)^2\right]'\left(s\kappa t + Z\right)} = 0
\end{align*}
applying \Corref{cor:phibar}.

\color{black}
\subsection{Derivation of Second Phase}
\label{sec:res2}

We now consider times $t\in \left[1,2\right]$ which means we have 
$$x_t^\mu = (2-t)\left(1-\frac{\kappa}{\sqrt{d}}\right)x_0^\mu + \left(\frac{\kappa}{\sqrt{d}} + \left(1-\frac{\kappa}{\sqrt{d}}\right)\left(t-1\right)\right)x_1^\mu.$$
We change variables to $\tau= t-1$ and consider $\tau\in \left[0, 1\right]$ so that 
$$x_\tau^\mu = (1-\tau)\left(1-\frac{\kappa}{\sqrt{d}}\right)x_0^\mu + \left(\frac{\kappa}{\sqrt{d}} + \left(1-\frac{\kappa}{\sqrt{d}}\right)\tau\right)x_1^\mu.$$
We compute the loss for a single datapoint, defining $\nu^\mu = s^\mu \phi\left(w\cdot x_\tau^\mu+b\right)$
\begin{align*}
    &\frac{1}{d}\left\|x_1^\mu - c\left((1-\tau)\left(1-\frac{\kappa}{\sqrt{d}}\right)x_0^\mu + \left(\frac{\kappa}{\sqrt{d}} + \left(1-\frac{\kappa}{\sqrt{d}}\right)\tau\right)x_1^\mu\right) - u \phi\left(w\cdot x_\tau^\mu+b\right)\right\|^2 \\
    &= \frac{1}{d}\left\|x_1^\mu - c((1-\tau)x_0^\mu + \tau x_1^\mu) - u s^\mu \nu^\mu\right\|^2+o_d(1)\\ 
    &= \frac{1}{d}\left\|(1-c\tau)(\sigma z^\mu + s^\mu \mu) - c(1-\tau)x_0^\mu - u s^\mu \nu^\mu\right\|^2+o_d(1)\\
    &= (1+\sigma^2)(1-c\tau)^2 + c^2(1-\tau)^2 + \frac{\|u\|^2}{d} -\frac{2 s^\mu \nu^\mu}{d} u \cdot ((1-c\tau)(\sigma z^\mu + s^\mu \mu) - c(1-\tau)x_0^\mu) +o_d(1)\\
    &= (1+\sigma^2)(1-c\tau)^2 + c^2(1-\tau)^2 + q -2\nu^\mu(1-c\tau)(\sigma q_\eta^\mu + m) + 2\nu^\mu c(1-\tau)q^\mu_\xi +o_d(1)
\end{align*}
where we defined the overlaps
\[
    q = \frac{\|u\|^2}{d}, \quad q_\eta^\mu = s^\mu \frac{u\cdot z^\mu}{d}, \quad q_\xi^\mu = s^\mu\frac{u\cdot x^\mu_0}{d}, \quad m = \frac{u\cdot\mu}{d}
\]
\[
    \quad p_\eta^\mu = s^\mu \frac{w\cdot z^\mu}{d}, \quad p_\xi^\mu = s^\mu\frac{w\cdot x^\mu_0}{d}, \quad \omega = \frac{w\cdot\mu}{d}.
\]

We also have 
\begin{align*}
    \nu^\mu = \phi\left(s^\mu w \cdot x_\tau^\mu+s^\mu b\right) &= \tanh\left((1-\tau)s^\mu w\cdot x_0^\mu + \left(\tau+\frac{\kappa}{\sqrt{d}}\right)s^\mu w\cdot (\sigma z^\mu + s^\mu \mu)+s^\mu b\right)\\
    &= \tanh\left(d\left((1-\tau)p_\xi^\mu + \left(\tau+\frac{\kappa}{\sqrt{d}}\right)(\sigma p_\eta^\mu + \omega)\right)+s^\mu b\right)\\
    &\asymp \text{sign}\left(\sqrt{d}\left((1-\tau)p_\xi^\mu + \tau(\sigma p_\eta^\mu + \omega)\right)+\kappa(\sigma p_\eta^\mu + \omega)\right)
\end{align*}

This gives the following 

\begin{align*}
    \log \mathcal Z(\mathcal{D}) = \text{extr}\Bigg\{&
        -\frac{n}{2}\left((1+\sigma^2)(1-c\tau)^2 + c^2(1-\tau)^2 + q -2(1-c\tau)(\sigma q_\eta + m)\overline \nu + 2 c(1-\tau)q_\xi\overline \nu \right) 
        \\&+\frac{q\hat q}{2} - m\hat m - n (q_\xi\hat q_\xi +q_\eta\hat q_\eta ) +\frac{\hat m^2+n(\hat q_\xi^2+\hat q_\eta^2)}{2(\lambda + \hat q)}
    \Bigg\}
\end{align*}
Taking gradients we get the following saddle-point equations
{
\setlength{\jot}{10pt} 
\renewcommand{\arraystretch}{1.5} 
\[
\left\{
\begin{array}{ll}
    q_\xi = \frac{\hat q_\xi}{\lambda + \hat q} = \frac{c(1-\tau)\overline{\nu}}{\lambda + n} \\
    q_\eta = \frac{\hat q_\eta}{\lambda + \hat q} = \frac{(1-c\tau)\overline{\nu}}{\lambda + n} \\
    m = \frac{\hat m}{\lambda + \hat q}=\frac{n(1-c\tau)\overline{\nu}}{\lambda + n} \\
    q = \frac{\hat m^2 + n(\hat q_\xi^2 + \hat q_\eta^2)}{(\lambda + \hat q)^2} = m^2 + nq_\xi^2 + n\sigma^2 q_\eta^2 \\
    c=\frac{(1+\sigma^2)\tau-\tau(\sigma q_\eta+m)\overline \nu-(1-\tau)q_\xi\overline \nu}{(1-\tau)^2+(1+\sigma^2)\tau^2}
\end{array}
\right.
\quad
\left\{
\begin{array}{ll}
    \hat q_\xi =  c(1-\tau)\overline{\nu} \\
    \hat q_\eta = \sigma (1-c\tau)\overline{\nu} \\
    \hat{m} = n(1-c\tau)\overline{\nu} \\
    \hat q = n \\
\end{array}
\right.
\]
}

\begin{align*}
    c &= \frac{(1+\sigma^2) \tau (\lambda + n) - \overline{\nu}^2 \tau (\sigma+n)}{ (\lambda+n)((1-\tau^2)+(1+\sigma^2)\tau^2)+\overline{\nu}^2\left((1-\tau)^2-\tau^2(\sigma+n)\right)}
\end{align*}

Corollary \ref{cor:3} simply follows from taking the $n\to\infty$ limit of this equations.

Lastly, we now argue that we can take $\overline{\nu}=1$ without loss of generality. If we assume a sample symmetric ansatz for $p_\eta^\mu=p_\eta, p_\xi^\mu=p_\xi$, then $\overline{\nu}$ can either be $\pm 1$ depending on the sign of argument. Noting that $q,c$ are unchanged while $q_\eta,q_\xi,m,\hat q_\eta,\hat q_\xi,\hat m$ flip sign, we can conclude that the log partition function is invariant so $\overline{\nu}=1$.

The characterizations of the learned parameters can be used to evaluate the MSE as a function of $t$, which we now describe, in the limit of $d\to\infty$ and then $n\to \infty$. For the first and second phase we obtain
\[
\text{mse}_{\text{train}} = \text{mse}_{\text{test}}= \begin{cases}
    \sigma^2 + (1-\overline{\phi^2}) & t\in [0,1]\\
   \sigma^2(1-c\tau)^2 + c^2(1-\tau)^2 & t \in[1,2]
\end{cases}
\]
At $t=0$, $\phi=\tanh(b) = 2(p - 1/2)$ hence the MSE is $\sigma^2  + 4p(1-p)$. At $t=1$ we have $c=0$ hence the MSE is $\sigma^2$, while at $t=2$ we have $c=1$ so the MSE is $0$. 

\section{Arguments for generation}
\label{gen:app}
Combining equations \ref{eq:dec:b} and \ref{eq:exact:b} gives the exact velocity field
\begin{equation}
    b_t(x) = \left(\dot\beta_t-\frac{\dot\alpha_t}{\alpha_t}\beta_t\right)\left(\frac{\beta_t\sigma^2}{\alpha_t^2 + \sigma^2\beta_t^2} x + \frac{\alpha_t^2}{\alpha_t^2 + \sigma^2\beta_t^2}\mu \tanh\left(\frac{\beta_t}{\alpha_t^2 + \sigma^2\beta_t^2} \mu \cdot x + h\right)\right) + \frac{\dot\alpha_t}{\alpha_t}x.
\end{equation}
where $\alpha_t=1-\tau_t$ and $\beta_t=\tau_t$ with $\tau_t$ from \eqref{eq:time_dil}. Let $\hat\theta_t$ denote any overlap from the first phase (see \eqref{first:ov}) in the limit of $d\to\infty$ but for finite $n,$ where $\theta_t$ denotes the corresponding overlap with $d\to\infty$ and then $n\to \infty.$ From Results \ref{res:first} and \ref{res:second} and their Corollaries \ref{cor:sta} and \ref{cor:3}, we have that $|\hat \theta_t - \theta_t|=O_n(1/n)$ for all overlaps.


Since $X_t-\hat{X}_t$ is contained in span$(u_t, \eta)$ which is in turn contained in span$(\mu, \eta, \xi)$, it suffices to show that, after dividing by $d$, the projections of $X_t-\hat{X}_t$ onto $\mu,$ $\eta,$ and $\xi$ are $O(1/n)$ to show that $\frac{1}{d}\|X_t-\hat{X}_t\|$ is $O(1/n).$
\subsection{Argument for Result \ref{res:exact:vs:learned}}
First, we note that as described in the paragraph above the statement of Result \ref{res:exact:vs:learned}, we have that since in the first phase $q=m^2+nq_\eta^2$ from Result \ref{res:first} we get for $t\in [0,1]$
\begin{align*}
    \lim_{d\to\infty} \frac{\|u_t\|^2}{d} = \lim_{d\to\infty}\left(\frac{\mu\cdot u_t}{d}\right)^2 + \left(\frac{\eta\cdot u_t}{d}\right)^2
\end{align*}
also since $q= m^2 + nq_\xi^2 + n q_\eta^2$ in the second phase, we get that for $t\in [1,2]$
\begin{align*}
    \lim_{d\to\infty} \frac{\|u_t\|^2}{d} = \lim_{d\to\infty}\left(\frac{\mu\cdot u_t}{d}\right)^2 + \left(\frac{\eta\cdot u_t}{d}\right)^2 + \left(\frac{\xi\cdot u_t}{d}\right)^2
\end{align*}
where $\eta =\sigma  \sum_{\mu=1}^n z^\mu$ and $\xi =\sum_\mu s^\mu x_0^\mu$
which implies that for any $w\in\text{span}(\mu,\eta,\xi)^\perp$ with $\|w\|_2=1$ we have
\begin{align*}
    \lim_{d\to\infty} \frac{w\cdot(X_2-\hat{X}_2)}{\sqrt{d}}=0.
\end{align*}
\subsubsection{First phase}
We focus on $t\in[0,1]$ and define
\begin{align*}
    &\epsilon^m_t = \frac{1}{\sqrt{d}} \mu \cdot (X_t - \hat{X}_t),\quad
    \epsilon^\eta_t = \frac{1}{\sigma^2 n\sqrt{d}} \eta \cdot (X_t - \hat{X}_t),\\
    &\delta_t = \frac{\beta_t}{\alpha_t^2 + \sigma^2 \beta_t^2},\quad
    \gamma_t = \frac{\alpha_t^2}{\alpha_t^2 + \sigma^2 \beta_t^2},\\
    &M_t = \frac{\mu \cdot X_t}{\sqrt{d}},\quad
    Q_t^\eta = \frac{\eta \cdot X_t}{\sigma^2 n\sqrt{d}}.
\end{align*}
We have
\begin{align*}
    \dot \epsilon_t^m &= \frac{1}{\sqrt{d}}\mu\cdot (\dot X_t - \dot{\hat {X_t}})\\
    =& \frac{1}{\sqrt{d}}\mu \cdot  (b_t(X_t) - \hat b_t(\hat X_t)) \\
    =& \frac{1}{\sqrt{d}}\mu \cdot \left(\dot \beta - \frac{\dot \alpha}{\alpha}\beta\right)\bigg( \sigma^2 \delta_t (X_t-\hat X_t) + (c_t-\sigma^2 \delta_t ) X_t + (\gamma_t\mu - u_t) \tanh(\delta_t \mu\cdot X_t + h) \\
    &+ u_t\left(\tanh(\delta_t \mu\cdot X_t + h) - \tanh(w_t\cdot X_t + b_t)\right) \bigg)+ \frac{\dot \alpha}{\alpha }\frac{1}{\sqrt{d}}\mu \cdot\left(X_t-\hat X_t \right)\\
    =& \left(\dot \beta - \frac{\dot \alpha}{\alpha}\beta\right)\bigg( \sigma^2 \delta_t \epsilon^m_t + (c_t-\sigma^2 \delta_t ) M_t + \sqrt{d}(\gamma_t - m_t) \tanh(\delta_t \mu\cdot X_t + h) \\
    &+ \sqrt{d}m_t\left(\tanh(\delta_t \mu\cdot X_t + h) - \tanh(w_t\cdot \hat X_t/\sqrt{d} + b_t)\right) \bigg)+ \frac{\dot \alpha}{\alpha }\epsilon^m_t\\
    =& \kappa \left(1 - t\frac{\dot \alpha}{\alpha}\right)\bigg( \frac{\delta_t}{\sqrt{d}} \epsilon^m_t + (c_t-\sigma^2 \delta_t ) \frac{M_t}{\sqrt{d}} + (\gamma_t - m_t) \tanh(\delta_t \mu\cdot X_t + h) \\
    &+ m_t\left(\tanh(\delta_t \mu\cdot X_t + h) - \tanh(w_t\cdot \hat X_t/\sqrt{d} + b_t)\right) \bigg)+ \frac{\dot \alpha}{\alpha }\epsilon^m_t
\end{align*}
We now focus on the $\tanh$
\begin{align*}
    &\left|\tanh(\delta_t \mu\cdot X_t + h) - \tanh(w_t\cdot \hat X_t/\sqrt{d} + b_t)\right|\\
    &\leq \left|\left(\delta_t \mu - \frac{w_t}{\sqrt{d}}\right)\cdot X_t\right| + \left|\frac{w_t}{\sqrt{d}} \cdot (X_t-\hat{X}_t)\right|+|h-b_t|\\
    &\leq \left|\left(\delta_t \mu - \frac{w_t}{\sqrt{d}}\right)\left(\frac{\mu\mu^T}{d}+\frac{\eta\eta^T}{\sigma^2 n^2d}\right) X_t\right| +\left| \frac{w_t}{\sqrt{d}}\left(\frac{\mu\mu^T}{d}+\frac{\eta\eta^T}{\sigma^2 n^2d}\right) (X_t-\hat{X}_t)\right|+\left|h-b_t\right|\\
    &\leq |\left(\sqrt{d}\delta_t - \omega_t\right)M_t| + |\left(\delta_t Z - p^\eta_t\right)Q_t|  + |\omega_t\epsilon^m_t| +|p_t^\eta\epsilon^\eta_t| + |h-b_t|\\
    &\leq \omega_t|\epsilon^m_t| + O\left(\frac{1}{n}\right)+ O\left(\frac{1}{\sqrt{d}}\right).
\end{align*}

Coming back to the ODE for $\dot \epsilon^m_t,$ we get with high probability
\begin{align*}
    |\dot \epsilon^m_t| = \kappa\gamma_t\omega_t|\epsilon^m_t| + O\left(\frac{1}{n}\right)+ O\left(\frac{1}{\sqrt{d}}\right).
\end{align*}
Since $\kappa\gamma_t\omega_t=\Theta_{n,d}(1)$ for $t\in[0,1],$ we get that with high probability $$\epsilon^m_{t=1} = O\left(\frac{1}{n}\right)+ O\left(\frac{1}{\sqrt{d}}\right).$$

By performing a similar computation for the ODE for $\epsilon^\eta_t,$ we get that with high probability $$\epsilon^\eta_{t=1} = O\left(\frac{1}{n}\right)+ O\left(\frac{1}{\sqrt{d}}\right).$$

\subsubsection{Second phase}
We now turn to $t\in[1,2]$ and define
\begin{align*}
    &\zeta^m_t = \frac{1}{{d}} \mu \cdot (X_t - \hat{X}_t),\quad
    \zeta^\eta_t = \frac{1}{\sigma^2 n{d}} \eta \cdot (X_t - \hat{X}_t)\quad
    \zeta^\xi_t = \frac{1}{n{d}} \xi \cdot (X_t - \hat{X}_t).
\end{align*}

With high probability, we have the following ODEs hold
\begin{align}
\frac{d}{dt} \zeta^m &= \left( \beta(t) c_t + \frac{\dot{\alpha}(t)}{\alpha(t)} (1 - c_t \beta(t)) \right) \zeta^m + O\left( \frac{1}{n} \right), \\
\frac{d}{dt} \zeta^\eta &= \left( \beta(t) c_t + \frac{\dot{\alpha}(t)}{\alpha(t)} (1 - c_t \beta(t)) \right) \zeta^\eta + O\left( \frac{1}{n} \right),\\
\frac{d}{dt} \zeta^\xi &= \left( \beta(t) c_t + \frac{\dot{\alpha}(t)}{\alpha(t)} (1 - c_t \beta(t)) \right) \zeta^\xi + O\left( \frac{1}{n} \right). 
\end{align}

from the initial condition $\zeta_1^m,\zeta_1^\eta=O(\tfrac{1}{\sqrt{d}})+O(\tfrac{1}{n}),\zeta_1^\xi=0.$ This yields
\[
\zeta_2^m , \zeta_2^\eta, \zeta_2^\xi = O(\tfrac{1}{n})+O(\tfrac{1}{\sqrt{d}})
\]
\subsection{Argument for Corollary \ref{cor:5}}
By Proposition \ref{prp:char:gm}, we know that 
\begin{align*}
    \lim_{\kappa\to\infty} \lim_{d\to\infty} \frac{\mu\cdot X_2}{d}\sim p\delta_1+(1-p)\delta_{-1}.
\end{align*}
By Result \ref{res:exact:vs:learned}, we get that 
\begin{align*}
    \lim_{n\to\infty} \lim_{d\to\infty} \frac{\mu\cdot (\hat{X}_2 - X_2)}{d} = 0.
\end{align*}
Combining the last two equations gives the first claim from the Corollary.

Fix $w\perp\mu$, $\|w\|=1.$ Again by Proposition \ref{prp:char:gm}, we have that 
\begin{align*}
    \lim_{d\to\infty}\frac{w\cdot X_2}{\sqrt{d}} \sim \mathcal{N}(0,\sigma^2).
\end{align*}
Also, Result \ref{res:exact:vs:learned} gives that 
\begin{align*}
    \lim_{n\to\infty} \lim_{d\to\infty}\frac{w\cdot (\hat{X}_2 - X_2)}{\sqrt{d}} = 0.
\end{align*}
which combined with the previous equation gives the second claim from the Corollary.

\section{Experimental details}
\label{subsec:exp}
The model used for the MNIST experiment consists of a U-Net architecture (\cite{DBLP:journals/corr/RonnebergerFB15}), consisting of four downsampling and four upsampling blocks with two layers per block and output channels of 128, 128, 256, and 512, respectively. Attention mechanisms are integrated into the third downsampling block and the second upsampling block to enhance feature representation at multiple scales. The training of the denoiser is described in the main text. We then use this denoiser to estimate the score and run the Variance Preserving SDE (see equation (11) in \cite{song2021scorebased}.)

For the discriminative model, we use the MNIST digit classification model by \cite{knight2022mnist} available on Hugging Face which achieves an accuracy of $99.1\%$ on MNIST classification.

As a sanity check, we show non-cherry-picked samples generated by the three models we considered in Figure \ref{fig:mnist_digits}. 
\begin{figure}
    \centering
    \begin{subfigure}{0.49\linewidth}
        \centering
        \includegraphics[width=\linewidth]{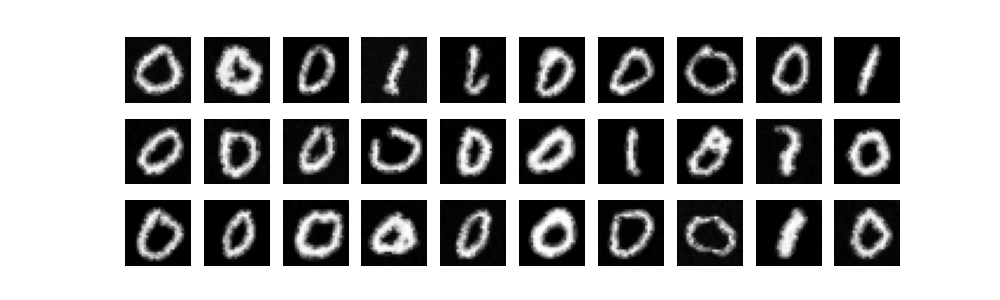}
        \caption{Training times with \\ prob. $1/2$ on $[.2, .6]$}
    \end{subfigure}
    \hfill
    \begin{subfigure}{0.49\linewidth}
        \centering
        \includegraphics[width=\linewidth]{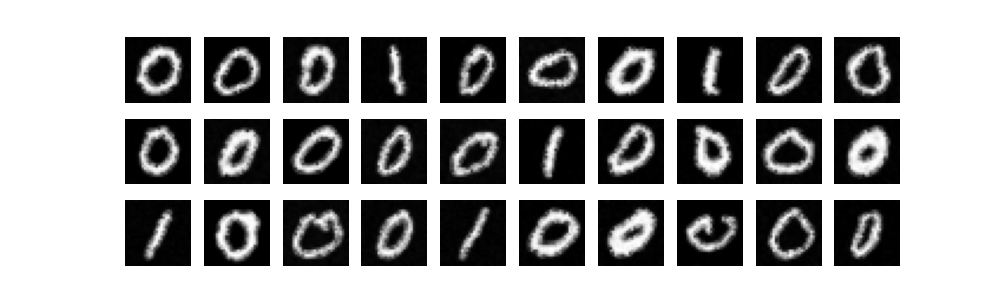}
        \caption{Training times with \\ prob. $1/2$ on $[.3, .5]$}
    \end{subfigure}
    \begin{subfigure}{0.49\linewidth}
        \centering
        \includegraphics[width=\linewidth]{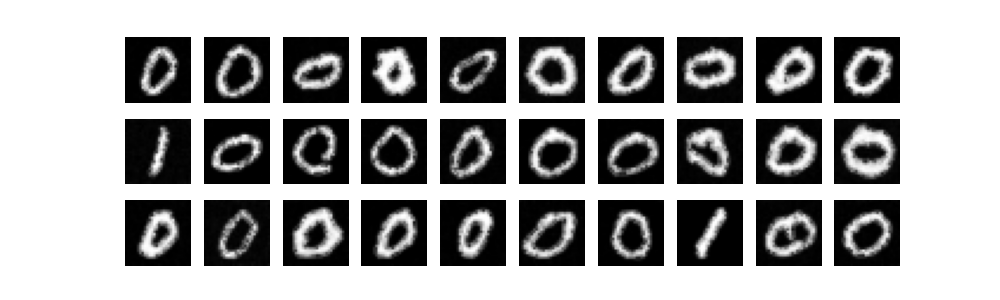}
        \caption{Training times \\ uniform on $[0, 1]$}
    \end{subfigure}
    \caption{Non-cherry-picked samples from the three generative models considered. \textbf{(a)} Samples from the VP SDE, where the times for training are drawn with probability $1/2$ uniformly from $[.2, .6]$ and with probability $1/2$ uniformly outside. \textbf{(b)} Same as left panel except that with probability $1/2$ training times are sampled from $[.3, .5].$ \textbf{(c)} Samples from the VP SDE with training times that are uniform in $[0, 1].$}
    \label{fig:mnist_digits}
\end{figure}

\section{General time dilation formula}
\label{app:gen:dil}
In this section, we generalize the time dilation formula from equation \ref{eq:time_dil} for a Gaussian mixture with more than two modes. Although the arguments in Results 1 and 2 only hold for the two-mode GM, the fact that a more general time dilation formula exists suggests that these results could be extended to the GM with more than two modes. \\

Consider $\mu=\sum_{i=1}^{m}p_{i}\mathcal{N}\left(r_{i},\sigma^{2}\text{I}\right)$ where $r_{i}\in\mathbb{R}^{d}$ and $|r_{i}|$ goes to infinity with $d,$but $m,p_{i},\sigma^{2}$ are constant with respect to $d.$ If $X_{t}$ is the generative model associated with the interpolant $I_{t}=(1-t)z+ta$ where $z\sim\mathcal{N}(0,\text{I})$ and $a\sim\mu$ (as we do in equation \eqref{eq:ode:gen:1} in the main text) then $X_{t}$ estimates $p_{i}$ at times of the order $1/|r_{i}|.$ We show this in Proposition \ref{prp:2} below by arguing that it is only at times of order $1/|r_{i}|$ that the denoiser associated to $r_{i}\cdot X_{t}/|r_{i}|$ is nontrivial. Hence, to estimate $p_{i}$ we require a time dilation $\tau_{t}$ such that there exists $a$ and $b$ with $b-a=\Theta_{d}(1)$ where 
\begin{align}
\tau_{t} & =\Theta_{d}\left(\frac{1}{|r_{i}|}\right)\,\,\text{for \,\,\,}t\in[a,b].\label{eq:cond}
\end{align}

We specify next a time dilation that for every $i$ would ensure that Equation \ref{eq:cond} is fulfilled. Assume $|r_{1}|\leq|r_{2}|\leq\cdots\leq|r_{m}|$, let $n=m+1$ and let $\kappa>0.$ Then
\begin{equation}
\tau_{t}=\begin{cases}
\frac{\kappa nt}{|r_{m}|} & \text{if }t\in[0,1/n]\\
\frac{\kappa(nt-1)}{|r_{m-1}|}+\frac{\kappa}{|r_{m}|} & \text{if }t\in[1/n,2/n]\\
\cdots\\
\frac{\kappa(nt-(m-1))}{|r_{1}|}+\kappa\left(\frac{1}{|r_{2}|}+\cdots+\frac{1}{|r_{m}|}\right) & \text{if }t\in[(m-1)/n,m/n]\\
\left(1-\kappa\left(\frac{1}{|r_{1}|}+\cdots+\frac{1}{|r_{m}|}\right)\right)t+\kappa\left(\frac{1}{|r_{1}|}+\cdots+\frac{1}{|r_{m}|}\right) & \text{if }t\in[m/n,1]
\end{cases}\label{eq:form}
\end{equation}

Then we have that $p_{i}$ is learned when $t\in[(m-i)/n,(m-i+1)/n]$ and the $\sigma^{2}$ will be learned when $t\in[m/n,1],$ giving rise to $m+1$ different phases. In the special case of $|r_{i}|=|r_{i+1}|,$ both $p_{i}$ and $p_{i+1}$ will already be learned in $[(m-i)/n,(m-i+1)/n]$ so that the phase on the interval $[(m-i+1)/n,(m-i+2)/n]$ is unnecessary. Taking this consideration into account when using the general formula in equation \ref{eq:form} for the two-mode GM gives the time dilation formula from equation \ref{eq:time_dil}. The only difference is that the time dilation here maps $[0,1]$ to $[0,1]$ and the one in equation \ref{eq:time_dil} maps $[0,1]$ to $[0,2].$ \\

\begin{prop}
    \label{prp:2}
    Let $\mu=\sum_{i=1}^{m}p_{i}\mathcal{N}\left(r_{i},\sigma^{2}\text{I}\right)$ where $r_{i}\in\mathbb{R}^{d}$ and $|r_{i}|=\omega_{d}(1).$ Consider the interpolant $I_{t}=(1-t)z+ta$ where $z\sim\mathcal{N}(0,\text{Id})$ and $a\sim\mu.$ Let $X_{t}$ be the generative model associated to $I_{t}$ as in equation \eqref{eq:ode:gen:1}. Then $X_{t}$ learns the $p_{i}$ at times $\Theta_{d}\left(1/|r_{i}|\right).$
\end{prop}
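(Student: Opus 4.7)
The plan is to mirror the proof of Proposition \ref{prp:char:gm}, but with the two-mode tanh replaced by an $m$-class softmax. First I would derive the exact denoiser and velocity field: using Bayes' rule as in Lemma \ref{lem:gen:gen}, $\mathbb{E}[a \mid I_t = x]$ is a convex combination of the means $r_i$ weighted by posteriors $\pi_i(t,x) \propto p_i \exp\bigl(-|x - t r_i|^2 / (2((1-t)^2 + t^2 \sigma^2))\bigr)$. Expanding the exponent, the $x$-dependence that discriminates between modes sits in the cross terms $t\,(r_i \cdot x)/((1-t)^2 + t^2\sigma^2)$, so the posterior takes the softmax form
\begin{align*}
\pi_i(t, x) = \frac{p_i\,\exp\!\bigl(\gamma_t\, r_i\cdot x - \tfrac{1}{2}\gamma_t t |r_i|^2\bigr)}{\sum_j p_j\,\exp\!\bigl(\gamma_t\, r_j\cdot x - \tfrac{1}{2}\gamma_t t |r_j|^2\bigr)}, \qquad \gamma_t = \frac{t}{(1-t)^2 + t^2\sigma^2}.
\end{align*}
The probability flow ODE for $X_t$ then has drift equal to a coefficient times $X_t$ plus $\sum_i \pi_i(t, X_t)\,r_i$ (up to the usual affine prefactors from Lemma \ref{lem:gen:gen}).

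Next I would track the scalar overlaps $\rho^{(i)}_t = (r_i/|r_i|)\cdot X_t$. At initialization, $X_0 \sim \mathcal{N}(0,\mathrm{Id}_d)$ so every $\rho^{(i)}_0 = \Theta_d(1)$, and along the perpendicular directions the drift remains linear, so these projections stay $\Theta_d(1)$ until the mode in direction $r_i$ is selected. Substituting, the key exponent controlling mode $i$'s weight is $\gamma_t |r_i| \rho^{(i)}_t$, which is $\Theta(t |r_i|)$ while $\rho^{(i)}_t$ is still order one. Hence $\pi_i$ only leaves the neighborhood of its prior value $p_i$ when $t|r_i| = \Theta_d(1)$. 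Dilating time by $s = t|r_i|$ and letting $d \to \infty$ (so all $|r_j|/|r_i|$ with $j \neq i$ tend to $0$ or $\infty$), the rescaled one-dimensional ODE for $\rho^{(i)}_s$ reduces to a generative flow analogous to \eqref{eq:vp:1st:mu} for a two-component ``mode $i$ versus rest'' mixture, whose stationary law assigns probability exactly $p_i$ to the branch $\mathrm{sgn}(\rho^{(i)}_s) = +1$. For $t \ll 1/|r_i|$ the softmax stays frozen at the prior, and for $t \gg 1/|r_i|$ the decision has saturated, so the window $t = \Theta_d(1/|r_i|)$ is both necessary and sufficient for learning $p_i$.

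The main obstacle will be handling modes with comparable norms or with highly correlated directions $r_i$. If $|r_i| = |r_j|$ up to constants, two softmax factors become simultaneously non-trivial on the same time scale, so the reduced system is no longer one-dimensional and one must analyze a coupled ODE on the $(\rho^{(i)}, \rho^{(j)})$ plane; a clean statement probably needs a generic-position assumption (e.g.\ $|r_i|/|r_j| \to 0$ for $i \neq j$, or near-orthogonal $r_i$'s so that the cross-projections $r_i \cdot r_j / (|r_i||r_j|)$ vanish in the high-dimensional limit). A second technical point is controlling the perpendicular components $X_t^\perp$ to the span of $\{r_i\}_{i \leq m}$ throughout the cascade of $m$ speciation windows: one would need to show inductively, as in the proof of Proposition \ref{prp:char:gm}, that each $X_t^\perp$ stays approximately Gaussian with slowly varying variance, so that the scaling argument identifying $\Theta_d(1/|r_i|)$ as the critical time is preserved across phases. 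Once those are in place, the result follows by piecing together the one-dimensional speciation arguments at each scale.
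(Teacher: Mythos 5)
Your proposal follows essentially the same route as the paper's proof of Proposition~\ref{prp:2}: project the generative flow onto the direction $r_i/|r_i|$, observe (via the softmax/tanh exponent) that the posterior only discriminates mode $i$ when $t|r_i| = \Theta_d(1)$, and conclude that the speciation window for $p_i$ is $t = \Theta_d(1/|r_i|)$. The paper packages this as two lemmas: Lemma~\ref{lem:6}, which asserts that $\nu_t = r_i\cdot X_t/|r_i|$ satisfies a closed one-dimensional self-consistent ODE $\dot\nu_t = \nu_t/t - \eta_t(\nu_t)/t$ with $\eta_t(\nu) = \E[Z\mid m_t=\nu]$, and Lemma~\ref{lem:7}, which shows via a density/approximate-identity argument that this scalar denoiser collapses to a trivial map outside the window $t = \Theta_d(1/|r_i|)$; your direct estimate of the exponent $\gamma_t|r_i|\rho^{(i)}_t$ is the same observation as Lemma~\ref{lem:7} seen from the velocity-field side. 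Where you differ is that you explicitly flag the step the paper slides past: from the explicit mixture velocity field in \eqref{eq:den_x}, the projection $r_i\cdot\eta_t(x)/|r_i|$ in general depends on \emph{all} of the overlaps $r_j\cdot x$ through the softmax denominator, not only on $\nu_t$, so Lemma~\ref{lem:6}'s claim that the projected ODE closes requires a decoupling condition (approximate orthogonality of the $r_j$ and/or the cascade argument that at scale $1/|r_i|$ the other softmax factors have already saturated or remain frozen). The paper states the closure without this justification; your note that a generic-position assumption and an inductive control of $X_t^\perp$ across the $m$ speciation windows would be needed to make the reduction rigorous is a sound and correct observation, not a defect of your proof.
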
 
\begin{proof}
Fix $i$. Let $m_{t}=r_{i}\cdot I_{t}/|r_{i}|.$ We have $m_{t}\stackrel{d}{=}(1-t)Z+t|r_{i}|m$ where $Z\sim\mathcal{N}(0,1)$ and $m=r_{i}\cdot a/|r_{i}|^{2}=\Theta_{d}(1).$ Let $\nu_{t}=r_{i}\cdot X_{t}/|r_{i}|.$ By Lemma \ref{lem:6}, $\nu_{t}$ obeys the self-consistent ODE 
\begin{equation}
\dot{\nu}_{t}=\frac{\nu_{t}}{t}-\frac{\eta_{t}(\nu_{t})}{t}\label{eq:nu_ode}
\end{equation}
 where $\eta_{t}(\nu)$ is the denoiser for $\nu_{t}$
\[
\eta_{t}(\nu)=\mathbb{E}[Z|m_{t}=\nu]=\mathbb{E}[Z|(1-t)Z+t|r_{i}|m=\nu].
\]

By Lemma \ref{lem:7}, since $|r_{i}|=\omega_{d}(1)$, the only times where this denoiser is nontrivial are $t=\Theta_{d}\left(1/|r_{i}|\right).$ We note that to estimate $p_{i}$ we need to estimate $\nu_{t}$, which requires spending a constant length of time in the nontrivial times of the ODE in equation \ref{eq:nu_ode}, which are the nontrivial times for the denoiser. Indeed, $p_{i}$ is learned on that interval, and if the length of that interval goes to $0$ as $d$ goes to infinity, we cannot estimate $p_{i}.$\\
\end{proof}

\begin{lem}
\label{lem:6}
Let $\mu=\sum_{i=1}^{m}p_{i}\mathcal{N}\left(r_{i},\sigma^{2}\text{I}\right)$ where $r_{i}\in\mathbb{R}^{d}.$ Consider the interpolant $I_{t}=(1-t)z+ta$ where $z\sim\mathcal{N}(0,\text{Id})$ and $a\sim\mu.$ Let $X_{t}$ be the generative model associated to $I_{t}$ from Lemma 5. Fix $i$ and let $m_{t}=r_{i}\cdot I_{t}/|r_{i}|$ and $\nu_{t}=r_{i}\cdot X_{t}/|r_{i}|.$ Then with $\eta_{t}(\nu)=\mathbb{E}[Z|m_{t}=\nu]$ we have 
\[
\dot{\nu}_{t}=\frac{\nu_{t}}{t}-\frac{\eta_{t}(\nu_{t})}{t}
\]
\end{lem}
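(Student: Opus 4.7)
The plan is to read off the velocity field of $X_t$ directly from the stochastic interpolant identity and then project it along $\hat r_i := r_i/|r_i|$. Differentiating $I_t = (1-t)z + ta$ gives $\dot I_t = a - z$, and eliminating $a$ via $a = (I_t - (1-t)z)/t$ yields the convenient rewriting $\dot I_t = (I_t - z)/t$. By definition of the probability flow in \eqref{eq:ode:gen:1}, the velocity is therefore $b_t(x) = \mathbb{E}[\dot I_t \mid I_t = x] = (x - \mathbb{E}[z \mid I_t = x])/t$. Setting $Z = \hat r_i \cdot z \sim \mathcal{N}(0,1)$ and projecting $\dot X_t = b_t(X_t)$ onto $\hat r_i$, I obtain
\[
\dot \nu_t \;=\; \hat r_i \cdot b_t(X_t) \;=\; \frac{1}{t}\bigl(\nu_t - \mathbb{E}[Z \mid I_t = X_t]\bigr).
\]

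The remaining step is to identify $\mathbb{E}[Z \mid I_t = X_t]$ with the one-dimensional denoiser $\eta_t(\nu_t) = \mathbb{E}[Z \mid m_t = \nu_t]$. The cleanest route is the marginalization principle for transport/probability-flow equations: the law of $\nu_t$ is the $\hat r_i$-pushforward of the law of $X_t$, and since $X_t \stackrel{d}{=} I_t$ under the interpolant coupling, this marginal is transported by the conditional velocity obtained by averaging $\hat r_i \cdot b_t(I_t)$ given $\hat r_i \cdot I_t = \nu$. By the tower property,
\[
\mathbb{E}\!\left[\mathbb{E}[Z \mid I_t] \,\big|\, \hat r_i \cdot I_t = \nu\right] \;=\; \mathbb{E}[Z \mid \hat r_i \cdot I_t = \nu] \;=\; \eta_t(\nu).
\]
The resulting one-dimensional transport equation is precisely the probability-flow ODE of the scalar stochastic interpolant $m_t = (1-t)Z + t|r_i|m$ (Lemma \ref{lem:gen:gen} applied in dimension one with $\alpha_t=1-t$, $\beta_t=t$, endpoints $Z$ and $|r_i|m$, with no added Gaussian component on the target side), whose velocity is exactly $\nu/t - \eta_t(\nu)/t$. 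This gives the self-consistent ODE asserted by the lemma.

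The main subtlety, which I expect to be the delicate step to state cleanly, is that $\hat r_i \cdot b_t(X_t)$ literally depends on the full state $X_t$ through $\mathbb{E}[Z \mid I_t = X_t]$, not merely on $\nu_t = \hat r_i \cdot X_t$ (the extra components $X_t^\perp$ carry information about the mode label $J$, which in turn refines the posterior of $Z$). Consequently the reduction to an autonomous scalar ODE driven by $\eta_t(\nu_t)$ should be read as a statement about the induced marginal flow on $\nu_t$ (equivalently, about $m_t$ viewed as a one-dimensional stochastic interpolant), justified by the tower step above. This is exactly the form the lemma is used in within Proposition \ref{prp:2}: the argument there only needs the window of times in which the scalar denoiser $\eta_t$ is nontrivial, and by Lemma \ref{lem:7} this window has width $\Theta_d(1/|r_i|)$, forcing the time spent learning $p_i$ to be of that order.
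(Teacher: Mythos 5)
Your proof is correct and takes a genuinely different route from the paper's. The paper proceeds by explicitly computing the exact denoiser formula (the tilted mixture-of-Gaussians posterior mean, equation \plaineqref{eq:den_x}) and then asserting that the projection $r_i\cdot\eta_t(x)/|r_i|$ depends on $x$ only through $\nu_t=r_i\cdot x/|r_i|$. You instead work abstractly: you read the velocity off directly from $\dot I_t=(I_t-z)/t$, project onto $\hat r_i$, and pass to the scalar marginal via the pushforward/continuity-equation principle plus the tower property $\mathbb{E}[\mathbb{E}[Z\mid I_t]\mid \hat r_i\cdot I_t=\nu]=\eta_t(\nu)$. Your route avoids the explicit formula entirely, and --- more importantly --- it surfaces a subtlety that the paper's proof glosses over: the pointwise claim that $r_i\cdot\eta_t(x)/|r_i|$ is a function of $\nu_t$ alone is true when the centers $\{r_j\}$ are collinear (as in the two-mode case $r_1=-r_2=\mu$), but for a general $m$-mode mixture the exponential weights $e^{2tr_j\cdot x/(2c_t)}$ for $j\neq i$ also enter, so the projected denoiser depends on the other components of $x$. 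Your reading of the lemma as a statement about the induced marginal flow of $\nu_t$ (equivalently, the 1D probability-flow ODE of the scalar interpolant $m_t=(1-t)Z+t|r_i|m$) is the correct interpretation, and you rightly note this is all that Proposition~\ref{prp:2} actually uses: Lemma~\ref{lem:7} is applied to the scalar denoiser $\eta_t$, which concerns only the marginal law. So both proofs arrive at a usable conclusion, but yours is both cleaner and more careful about what the lemma actually asserts.
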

\begin{proof}
We have from Appendix A, \cite{albergo2023stochasticinterpolantsunifyingframework} that the velocity field $b_{t}(x)$ associated with $I_{t}=(1-t)z+ta$ where $a\sim\sum_{i=1}^{m}p_{i}\mathcal{N}\left(r_{i},\text{Id}\right)$ can be written explicitly as 
\begin{align*}
\frac{\sum_{i=1}^{m}p_{i}\left(r_{i}+\frac{\dot{c_{t}}}{2c_{t}}(x-tr_{i})\right)\mathcal{N}(x\mid tr_{i},c_{t}\text{I})}{\sum_{i=1}^{m}p_{i}\mathcal{N}(x\mid tr_{i},c_{t}\text{I})} & =\frac{\sum_{i=1}^{m}p_{i}\left(r_{i}+\frac{\dot{c_{t}}}{2c_{t}}(x-tr_{i})\right)e^{\frac{2tr_{i}\cdot x-t^{2}|r_{i}|^{2}}{2((1-t)^{2}+t^{2})}}}{\sum_{i=1}^{m}p_{i}e^{\frac{2tr_{i}\cdot x-t^{2}|r_{i}|^{2}}{2((1-t)^{2}+t^{2})}}}\\
 & =\frac{\dot{c_{t}}}{2c_{t}}x+\frac{\sum_{i=1}^{m}p_{i}\left(1-\frac{\dot{c_{t}}}{2c_{t}}t\right)e^{\frac{2tr_{i}\cdot x-t^{2}|r_{i}|^{2}}{2((1-t)^{2}+t^{2})}}}{\sum_{i=1}^{m}p_{i}e^{\frac{2tr_{i}\cdot x-t^{2}|r_{i}|^{2}}{2((1-t)^{2}+t^{2})}}}r_{i}
\end{align*}

where $c_{t}=(1-t)^{2}+t^{2}.$ The denoiser $\eta_{t}(x)=\mathbb{E}[z|I_{t}=x]$ is
\begin{align}
\eta_{t}(x)=x-tb_{t}(x) & =\left(1-\frac{\dot{c_{t}}}{2c_{t}}\right)x-\frac{\sum_{i=1}^{m}p_{i}\left(t-\frac{\dot{c_{t}}}{2c_{t}}t^{2}\right)e^{\frac{2tr_{i}\cdot x-t^{2}|r_{i}|^{2}}{2((1-t)^{2}+t^{2})}}}{\sum_{i=1}^{m}p_{i}e^{\frac{2tr_{i}\cdot x-t^{2}|r_{i}|^{2}}{2((1-t)^{2}+t^{2})}}}r_{i}.\label{eq:den_x}
\end{align}

Fix $i$ and let $m_{t}=r_{i}\cdot I_{t}/|r_{i}|$ and $\nu_{t}=r_{i}\cdot X_{t}/|r_{i}|.$ Since $\dot{X}_{t}=b(X_{t}),$ we get that 
\[
\dot{\nu}_{t}=\frac{r_{i}\cdot b(X_{t})}{|r_{i}|}=\frac{\nu_{t}}{t}-\frac{1}{t}\frac{r_{i}\cdot\eta_{t}(X_{t})}{|r_{i}|}=\frac{\nu_{t}}{t}-\frac{\eta_{t}(\nu_{t})}{t},
\]

where the denoiser for the $\nu_{t}$ is defined as $\eta_{t}(\nu)=\mathbb{E}[Z|m_{t}=\nu].$ The last step in the displayed equality follows since from equation \ref{eq:den_x} we get that $r_{i}\cdot\eta_{t}(x)/|r_{i}|$ depends on $x$ only through $\nu_{t}.$
\end{proof}

\begin{lem}
\label{lem:7}
Let $Z\sim\mathcal{N}(0,1)$ and $M\sim\mu$. Then for fixed $\gamma>0$ we have that as $d\to\infty$
\begin{align*}
\mathbb{E}[Z|Z+d^{-\gamma}M=x] & \to x\\
\mathbb{E}[Z|Z+d^{\gamma}M=x] & \to\mathbb{E}[Z]=0
\end{align*}
\end{lem}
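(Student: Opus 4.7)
The starting observation is the identity $Z = Y_\alpha - \alpha M$ with $Y_\alpha := Z + \alpha M$, which gives
\begin{equation*}
\mathbb{E}[Z \mid Y_\alpha = x] \;=\; x - \alpha\,\mathbb{E}[M \mid Y_\alpha = x].
\end{equation*}
Both claims thus reduce to controlling the posterior expectation $\mathbb{E}[M \mid Y_\alpha = x]$ in the regimes $\alpha = d^{-\gamma} \to 0$ and $\alpha = d^{\gamma} \to \infty$. By Bayes' rule, writing $\phi$ for the standard normal density, the posterior of $M$ is
\begin{equation*}
p_{M\mid Y_\alpha}(m\mid x) \;=\; \frac{\phi(x - \alpha m)\,\mu(m)}{\int \phi(x - \alpha m')\,\mu(m')\,dm'},
\end{equation*}
where $\mu$ is the (one-dimensional) Gaussian-mixture density, which is bounded and has finite mean.

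For the first claim ($\alpha = d^{-\gamma} \to 0$), I would argue that $\phi(x - \alpha m) \to \phi(x)$ pointwise in $m$ and is dominated uniformly in $\alpha \in (0,1]$ by an integrable majorant of the form $C\,e^{-cm^2}$ obtained by combining the Gaussian factor with the Gaussian tails of $\mu$. Dominated convergence then gives
\begin{equation*}
\mathbb{E}[M \mid Y_\alpha = x] \;\longrightarrow\; \frac{\int m\,\phi(x)\,\mu(m)\,dm}{\int \phi(x)\,\mu(m)\,dm} \;=\; \mathbb{E}[M],
\end{equation*}
a finite constant. Multiplying by $\alpha \to 0$ kills this term and leaves $\mathbb{E}[Z \mid Y_\alpha = x] \to x$.

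For the second claim ($\alpha = d^{\gamma} \to \infty$), the change of variables $u = \alpha m - x$ rewrites the posterior ratio as
\begin{equation*}
\mathbb{E}[M \mid Y_\alpha = x] \;=\; \frac{1}{\alpha}\,\cdot\,\frac{\int (x+u)\,\phi(u)\,\mu\!\bigl((x+u)/\alpha\bigr)\,du}{\int \phi(u)\,\mu\!\bigl((x+u)/\alpha\bigr)\,du}.
\end{equation*}
Because $\mu$ is continuous and bounded (being a finite sum of Gaussian densities), $\mu((x+u)/\alpha) \to \mu(0) > 0$ pointwise in $u$, and the ratio is amenable to dominated convergence against the majorants $(|x|+|u|)\phi(u)$ and $\phi(u)$. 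The limit of the ratio collapses to $\int (x+u)\phi(u)\,du = x$, hence $\alpha\,\mathbb{E}[M \mid Y_\alpha = x] \to x$, and substituting into the opening identity yields $\mathbb{E}[Z \mid Y_\alpha = x] \to 0$.

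The only non-routine points are (i) producing a uniform-in-$\alpha$ Gaussian-in-$m$ envelope so that DCT applies in the small-$\alpha$ regime, which uses that the product $\phi(x-\alpha m)\mu(m)$ inherits fast decay in $m$ from $\mu$ regardless of $\alpha \in (0,1]$; and (ii) verifying that the denominator in the large-$\alpha$ regime stays bounded away from zero so the ratio is stable, which follows from $\mu$ being continuous with $\mu(0)>0$. Neither requires the mixture structure of $\mu$ beyond boundedness, continuity at $0$, and finite mean, so the same argument would apply to more general priors.
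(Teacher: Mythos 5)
Your proof is correct, but it takes a genuinely different route from the paper's. The paper works directly with the Bayes numerator/denominator over $z$, writing $\mathbb{E}[Z\mid Z+d^{-\gamma}M=x]$ as a ratio of integrals of the form $\int g(z)\,d^\gamma f_M\!\bigl(d^\gamma(x-z)\bigr)\,dz$, and then invokes the fact that $d^{\gamma}f_{M}(d^{\gamma}\,\cdot)$ is an approximation to the identity, so that the mollified numerator and denominator both converge to their values at $z=x$; the large-$\alpha$ case is then declared to ``follow similarly.'' You instead use the algebraic identity $Z = Y_\alpha - \alpha M$ (with $Y_\alpha = Z+\alpha M$) to reduce both limits to the single object $\alpha\,\mathbb{E}[M\mid Y_\alpha = x]$, and you handle the two regimes by dominated convergence on the posterior of $M$ — directly for $\alpha\to 0$, and after the substitution $u=\alpha m - x$ for $\alpha\to\infty$. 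The benefit of your route is that it unifies the two claims under one identity and replaces the mollification lemma with ordinary DCT, with clean integrable majorants; the benefit of the paper's route is that the $\alpha\to 0$ case is essentially immediate once one recognizes the convolution kernel. Your proof is also slightly more careful about hypotheses: you explicitly isolate the assumptions needed on $\mu$ ($\mu$ bounded, continuous at $0$ with $\mu(0)>0$, finite mean), all of which hold for the Gaussian mixture, whereas the paper leaves the second limit's reliance on $f_M(0)>0$ implicit in the phrase ``follows similarly.'' One small remark worth keeping in mind for applications: as used in Proposition~\ref{prp:2}, the law of $M$ actually depends on $d$ through the projections $r_i\cdot a/|r_i|^2$, a point that neither proof addresses, but that is a deficiency of the lemma's statement rather than of your argument.
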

\begin{proof}
Let $f_{Z,X}(z,x)$ be the joint density of $Z$ and $X=Z+d^{-\gamma}M$ and $f_{Z,M}(z,m)$ the joint density of $Z$ and $M.$ We note that $f_{Z,X}(z,x)=f_{Z,M}(z,d^{\gamma}(x-z))=f_{Z}(z)f_{M}(d^{\gamma}(x-z))$
\begin{align*}
\mathbb{E}[Z|Z+d^{-\gamma}M=x] & =\frac{\int zf_{Z,X}(z,x)dz}{\int f_{Z,X}(z,x)dz}\\
 & =\frac{\int zf_{Z}(z)f_{M}(d^{\gamma}(x-z))dz}{\int f_{Z}(z)f_{M}(d^{\gamma}(x-z))dz}\\
 & =\frac{\int zf_{Z}(z)d^{\gamma}f_{M}(d^{\gamma}(x-z))dz}{\int f_{Z}(z)d^{\gamma}f_{M}(d^{\gamma}(x-z))dz}\\
 & \to x
\end{align*}

where the last step follows since $d^{\gamma}f_{M}(d^{\gamma}z)$ is an approximation to the identity. The other limit follows similarly.
\end{proof}
\end{document}